\documentclass{article}

\usepackage{xurl}
\usepackage{todonotes}
\usepackage{tabularx} 
\usepackage{multicol}
\usepackage{multirow}
\usepackage{comment}
\usepackage{subcaption}

\usepackage{amsmath,amssymb,amsthm}
\usepackage{graphicx}
\usepackage{import}
\usepackage{booktabs}
\usepackage{hyperref}
\usepackage{bm}

\usepackage[accepted]{icml2026}
    \icmltitlerunning{Sensitivity-Conditioned Bernoulli Flow Matching}

\newtheorem{theorem}{Theorem}

\newtheorem{proposition}{Proposition}

\newtheorem{definition}{Definition}
\newtheorem{assumption}{Assumption}

\newcommand{\R}{\mathbb{R}}
\newcommand{\E}{\mathbb{E}}

\newcommand{\Vol}{\mathcal{V}}
\newcommand{\Obj}{\mathcal{J}}
\newcommand{\Res}{\mathcal{R}}

\newcommand{\vv}{\bm{v}}
\newcommand{\ww}{\bm{w}}

\DeclareMathOperator{\CE}{CE}

\newcommand{\mohammadd}[1]{\textcolor{black}{ #1}}
\newcommand{\duarte}[1]{\textcolor{black}{ #1}}

\newcommand{\yunjia}[1]{\textcolor{black}{ #1}}
\newcommand{\nils}[1]{\textcolor{black}{#1}}
\newenvironment{mohammad}
  {\begingroup\color{black}}
  {\endgroup}

\begin{document}

\twocolumn[
\icmltitle{On the Generalization in Topology Optimization via Sensitivity-Conditioned Bernoulli Flow Matching}
\begin{icmlauthorlist}
\icmlauthor{Mohammad Rashed}{tum,bmw,mcml}
\icmlauthor{Duarte F. Valoroso Madeira}{bmw,tuhh}
\icmlauthor{Babak Gholami}{bmw}
\icmlauthor{Caglar Guerbuez}{bmw}
\icmlauthor{Yunjia Yang}{tum}
\icmlauthor{Nils Thuerey}{tum,mcml}
\end{icmlauthorlist}

\icmlaffiliation{tum}{School of Computation, Information and Technology, Technical University of Munich, Germany}
\icmlaffiliation{bmw}{BMW AG, Munich, Germany}
\icmlaffiliation{tuhh}{Hamburg University of Technology, Hamburg, Germany}
\icmlaffiliation{mcml}{Munich Center for Machine Learning, Munich, Germany.}
\icmlcorrespondingauthor{Mohammad Rashed}{m.rashed@tum.de}

\vskip 0.3in
]

\printAffiliationsAndNotice{}

\begin{abstract}

Surrogate models for topology optimization (TO) exhibit highly variable out-of-distribution (OOD) generalization under distribution shifts such as changing loads or boundary conditions, yet the source of this variability remains unclear. We hypothesize that OOD performance is governed by how much information the conditioning signal preserves about the adjoint sensitivity (reduced gradient) that drives classical TO. Modeling the TO pipeline as a causal Markov chain, the Data Processing Inequality establishes that, under this abstraction, the sensitivity field is an information-theoretically optimal conditioning signal for topology prediction. However, computing exact adjoint sensitivities can be expensive or unavailable in practice; we observe that certain physical fields can approximate sensitivities through monotone transformations. To formalize this, we introduce \textbf{pseudo-sensitivities} to characterize which fields enable generalization versus those that are information-poor. We then show that a sensitivity-conditioned Bernoulli flow-matching generator empirically confirms these predictions: conditioning on sensitivities yields state-of-the-art OOD performance, while increasingly distant physical fields degrade toward raw parameter conditioning. Results hold across structural TO benchmarks under load shifts and our new CFD-TO dataset under boundary-condition shifts such as multi-outlet configurations. Code and datasets are available at \url{https://github.com/tum-pbs/topotransformer}.
\end{abstract}

\section{Introduction}
\begin{figure}[t]
\centering
\begin{subfigure}[b]{1\columnwidth}
\includegraphics[width=\columnwidth]{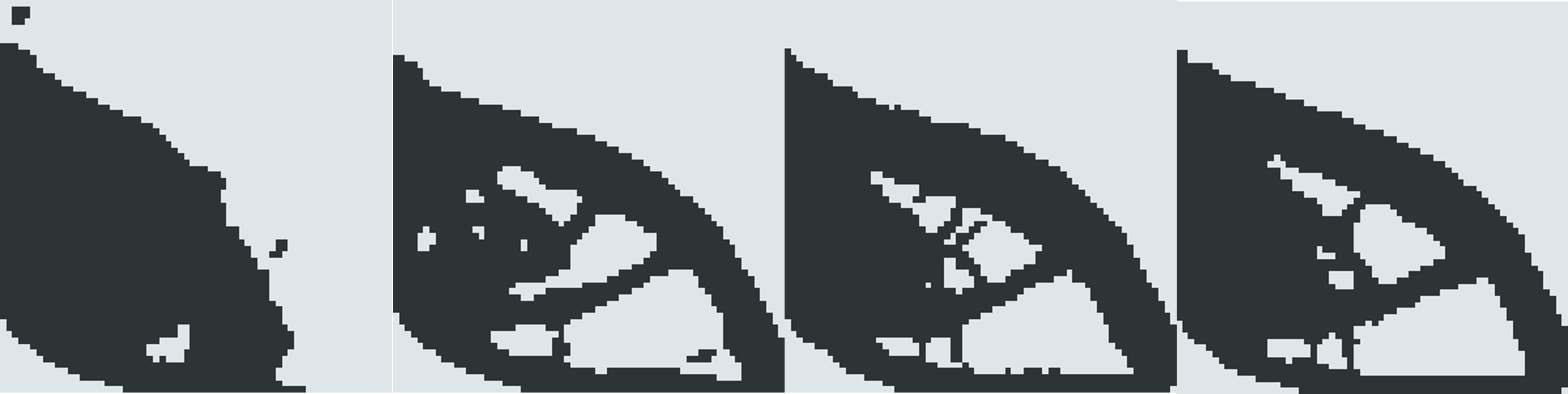}
\end{subfigure}
\begin{subfigure}[b]{1\columnwidth}
\includegraphics[width=\columnwidth]{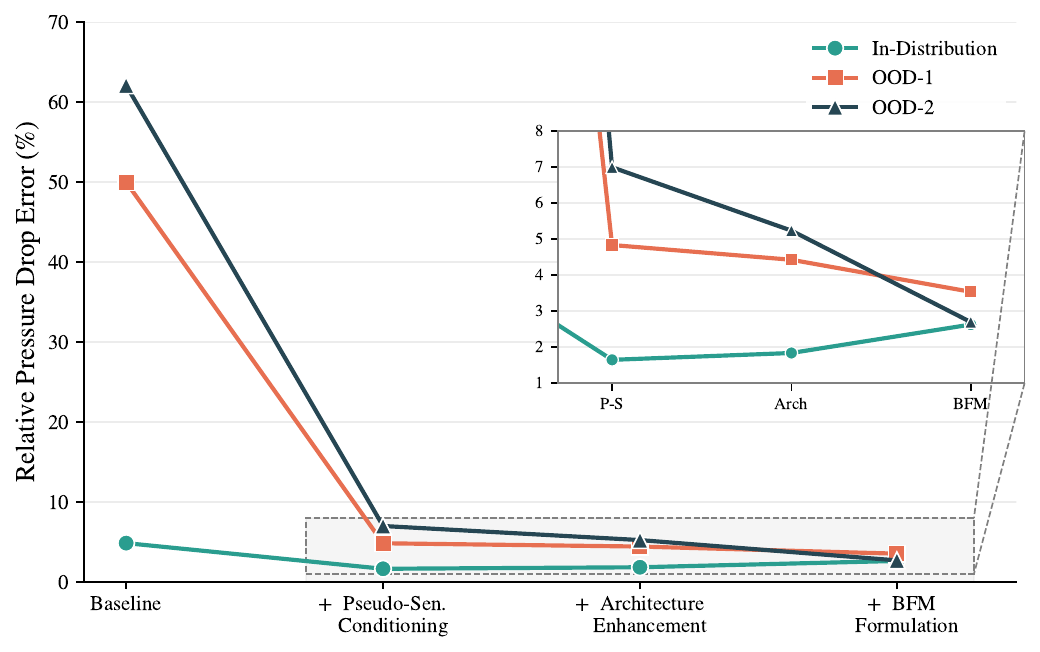}
\end{subfigure}
\caption{Conditioning signal governs OOD generalization. \textit{Bottom:} Quantitative CFD results reveal that the most significant performance gain comes from pseudo-sensitivity conditioning, empirically validating our theoretical finding that sensitivity-aligned signals are information-theoretically optimal. Subsequent architectural changes (such as Bernoulli Flow Matching) provide secondary, gradual refinements. \textit{Top:} Qualitative predictions mirror this trend: the conditioning signal establishes the correct global topology and connectivity, while additional model modules primarily enhance boundary sharpness. This highlights that the proximity of the conditioning signal to the true sensitivity is the dominant factor in solving out-of-distribution topology problems.}
\label{fig:teaser}
\end{figure}

Topology optimization seeks an optimal material distribution\duarte{, by typically} minimizing an objective functional subject to PDE constraints~\cite{bendsoe1988generating,sigmund2013topology}. Classical gradient-based methods rely on repeated forward and adjoint PDE solves, often requiring hundreds of iterations. This computational burden is especially severe for CFD problems involving turbulent flows, where each adjoint solve may be unstable or require simplifying assumptions such as frozen turbulence.

Recent learning-based approaches aim to amortize this cost by predicting optimal topologies directly from problem specifications. Deterministic regressors conditioned on boundary conditions and loads achieve fast inference but exhibit limited out-of-distribution (OOD) generalization \cite{nobari2024nito}. Generative models such as TopoDiffusion~\cite{maze2022topodiff} offer improved robustness by conditioning on strain energy density along with other physical fields, though the mechanism underlying this improvement remains unclear. Moreover, existing generative approaches rely on continuous Gaussian diffusion \cite{maze2022topodiff, DBLP:journals/corr/abs-2303-09760}, a choice that is fundamentally misaligned with binary topology representations.

\yunjia{Another} notable gap in the literature is that all existing learning-based topology optimization methods target structural compliance problems. To our knowledge, no prior work has applied deep generative models to turbulent CFD topology optimization, likely due to the expense of generating training data and the complexity of computing adjoint sensitivities under RANS turbulence closures.

In this work, we argue that while architectural innovations can enhance generalization, the dominant factor governing out-of-distribution performance is the choice of conditioning signal. Modeling the optimization pipeline as a Markov chain, we apply the Data Processing Inequality to show that adjoint sensitivities are information-theoretically optimal for predicting optimal topologies. We further introduce pseudo-sensitivities (physical fields related to true sensitivities via monotone transformations) which explains why strain energy density succeeds for structural problems\yunjia{,} and identifies velocity magnitude as the analogous signal for CFD.
To this end, we \nils{make the following contributions}:
\vspace{-10pt}
\begin{itemize}
    \item We show that adjoint sensitivities minimize conditional entropy for topology prediction (Theorem~\ref{thm:sensitivity_dpi}), providing a principled explanation for OOD generalization.
    \item We formalize pseudo-sensitivities (Definition~\ref{def:pseudo_sensitivity}) and derive closed-form expressions for structural compliance and CFD energy dissipation, unifying prior empirical conditioning choices under adjoint theory.
    \item We present the first learning-based approach for turbulent CFD topology optimization, releasing a dataset of 10,000 problems with multi-outlet OOD splits.
    \item We propose a sensitivity-conditioned Bernoulli Flow Model that respects the binary nature of topology, demonstrating improved OOD generalization over continuous diffusion baselines.
\end{itemize}

\section{Related Work}

\paragraph{Classical Topology Optimization.}
The Solid Isotropic Material with Penalization (SIMP) method~\cite{bendsoe1989optimal,zhou1991coc,mlejnek1992genesis} serves as the foundation for topology optimization. SIMP relaxes the binary material distribution $\rho \in \{0,1\}^N$ into a continuous density field $\rho \in [0,1]^N$, with penalization to encourage binary solutions. Optimization techniques such as the optimality criteria method~\cite{bendsoe2004topology} and the method of moving asymptotes~\cite{svanberg1987mma}, typically used with SIMP, rely on adjoint sensitivity analysis to efficiently compute gradients in these high-dimensional design spaces. And while these classical methods achieve high-quality solutions, they require solving \duarte{hundreds} of expensive PDEs, motivating learning-based acceleration.

\paragraph{Learning-Based Topology Optimization.}
Early learning approaches~\cite{sosnovik2019neural,cang2019improving} used neural networks to accelerate iterative optimization by predicting intermediate topologies or gradients. More recent methods aim for direct, non-iterative prediction. Neural Implicit Topology Optimization (NITO)~\cite{nobari2024nito} employs implicit neural representations for resolution-free topology generation, achieving significant speedups over SIMP with 80\% lower compliance errors than prior methods. However, deterministic approaches conditioned solely on boundary conditions and loads exhibit limited OOD generalization.

Generative models offer improved robustness through learned stochastic mappings. TopoDiffusion~\cite{maze2022topodiff} introduced diffusion models for topology optimization, conditioning on strain/stress energy density and employing surrogate-guided sampling to minimize compliance and ensure manufacturability. It demonstrated 8$\times$ reduction in OOD compliance error over conditional GANs~\cite{kallioras2020accelerated,nie2021topologygan}. Despite empirical success, theoretical understanding of why these models generalize remains limited.

\paragraph{Adjoint Methods in CFD and Topology Optimization.}
Adjoint sensitivity analysis~\cite{jameson1988aerodynamic,giles2000introduction,giles1997adjoint} is foundational for gradient-based design optimization in fluid mechanics. For turbulent flows, the treatment of turbulence model equations in the adjoint system is critical~\cite{giles2000introduction,marta2013handling}. The ``frozen turbulence'' assumption~\cite{marta2013handling}, which neglects variations of eddy viscosity with respect to design variables, significantly simplifies implementation but can produce wrongly-signed sensitivities in turbulent regions~\cite{kuehl2021adjoint}. Recent work addresses this through exact discrete adjoints~\cite{dilgen2018topology} and wall-function consistent formulations~\cite{palacios2013hybrid}. To our knowledge, no prior work has applied learning-based generative models to turbulent RANS topology optimization.

\paragraph{Generative Models: Diffusion and Flow Matching.}
Denoising diffusion probabilistic models (DDPM)~\cite{ho2020denoising,nichol2021improved,dhariwal2021diffusion} have achieved state-of-the-art image generation by learning to reverse a gradual noising process. Conditional variants~\cite{ho2021classifierfree,dhariwal2021diffusion} enable control through classifier-free guidance. However, DDPMs model continuous Gaussian distributions, which are fundamentally misaligned with binary topology representations.

Flow matching~\cite{lipman2023flow} provides a simulation-free alternative for training continuous normalizing flows (CNFs) by regressing vector fields along probability paths. It subsumes diffusion as a special case while enabling more efficient optimal transport paths. Recent extensions address discrete and categorical data. Discrete diffusion models~\cite{austin2021structured,hoogeboom2021argmax} handle categorical distributions through transition matrices. Most relevant to our work, Bernoulli Flow Models (BFM)~\cite{gat2024bernoulli} introduce flow matching for binary data by operating in Bernoulli parameter space with closed-form posteriors, avoiding invalid parameters and model collapse. 

\section{Preliminaries and Problem Formulation}

\subsection{Topology Optimization Formulation}
Consider a design domain $\Omega \subset \R^d$. We aim to find a density field $\rho \in L^\infty(\Omega)$ that minimizes an objective $\Obj$, subject to a volume constraint $\Vol_{req}$ and a PDE constraint, with
\[
0 \le \rho(x) \le 1 \quad \text{for almost every } x \in \Omega.
\]
Let $\mathcal{U}$ be the state space (e.g., $H^1_0(\Omega)$). The optimization problem is:
\begin{equation}
\label{eq:optimization}
\begin{aligned}
    \min_{\rho} \quad & \Obj(u(\rho), \rho) \\
    \text{s.t.} \quad & \Res(u, \rho) = 0 \quad (\text{State Equation}) \\
    & \int_\Omega \rho \, dx \le \Vol_{req}, \quad 0 \le \rho(x) \le 1.
\end{aligned}
\end{equation}
where $\Res: \mathcal{U} \times L^\infty(\Omega) \to \mathcal{U}^*$ is the PDE residual and $u(\rho)$ denotes the state solving the PDE for a given design $\rho$.

\subsection{The Reduced Gradient (Sensitivity)}
To solve \eqref{eq:optimization}, one typically employs the adjoint method. Let $\lambda \in \mathcal{U}$ be the adjoint state satisfying:
\begin{equation}
    \label{eq:adjoint}
    \left(\frac{\partial \Res}{\partial u}\right)^* \lambda = \frac{\partial \Obj}{\partial u}.
\end{equation}
The \textbf{Sensitivity Field} $S \in L^\infty(\Omega)$ is defined as the total derivative (reduced gradient) of the objective with respect to $\rho$:
\begin{equation}
    \label{eq:sensitivity}
    S(x) := \frac{d\Obj}{d\rho}(x) = \frac{\partial \Obj}{\partial \rho}(x) - \left\langle \lambda, \frac{\partial \Res}{\partial \rho}(\cdot, x) \right\rangle.
\end{equation}
We write $S(\rho)$ when we wish to emphasize the dependence on the design $\rho$, and denote by 
\[
S_0 := S(\rho_{\mathrm{init}})
\]
the \emph{initial} sensitivity field evaluated at the initial design $\rho_{\mathrm{init}}$. Abusing notation, we use the same symbols ($\Theta, X, S_0, \rho^\star$) to denote both random variables and their realizations; the distinction will be clear from context.
\begin{figure*}[t]
\centering
\includegraphics[width=0.9\textwidth]{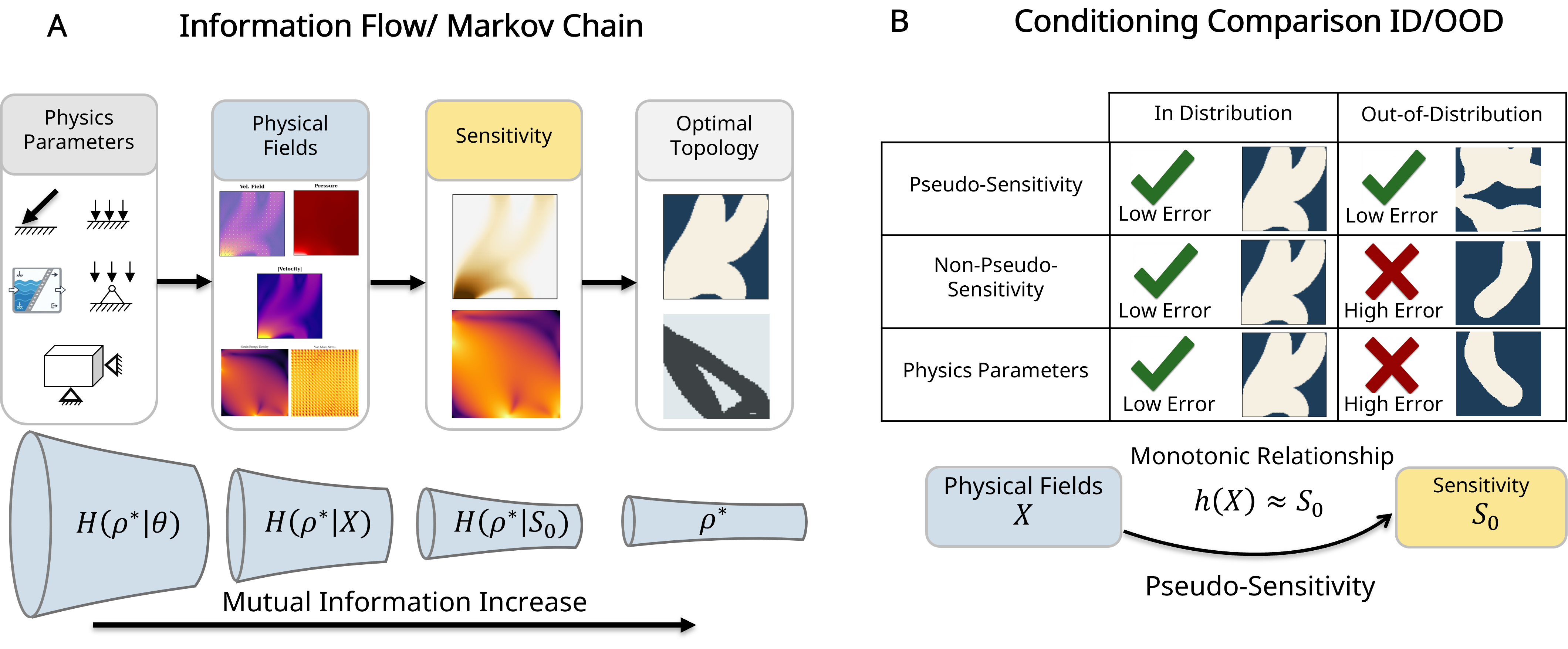}
\caption{Information flow in topology optimization. (A) The TO pipeline forms a Markov chain $\Theta \to X \to S_0 \to \rho^\star$, with conditional entropy decreasing toward the sensitivity field. (B) Conditioning comparison: pseudo-sensitivities generalize OOD, while non-pseudo fields and raw parameters degrade.}
\label{fig:dpi}
\end{figure*}

\section{Theoretical Framework: Conditioning and Generalization in Topology Optimization}

We analyze learning-based topology optimization through the lens of information flow.
Our goal is to understand which conditioning signals fundamentally enable
generalization across problem instances, independent of the particular generative
model used.

Throughout, we consider first-order topology optimization methods (e.g., SIMP) and focus on the \emph{initial sensitivity field}, which empirically governs the
global structure of the final design.

\subsection{A Causal Abstraction of Topology Optimization}

Let $\Theta \sim p(\Theta)$ denote random problem parameters, including boundary
conditions, loads, material properties, and geometry. Let $\rho^\star$ denote the
converged (binary) topology produced by a classical optimizer initialized from a
fixed initial design $\rho_{\mathrm{init}}$.
We model the topology optimization pipeline as the following sequence of operators:
\begin{equation}
\Theta
\;\xrightarrow{\;\mathcal{F}\;}\;
X
\;\xrightarrow{\;\mathcal{A}\;}\;
S_0
\;\xrightarrow{\;\mathcal{T}\;}\;
\rho^\star ,
\label{eq:causal_chain}
\end{equation}
where \nils{$S_0 := \tfrac{d\Obj}{d\rho}\big|_{\rho_{\mathrm{init}}}$ is the
    \textbf{initial reduced gradient (sensitivity field)} computed via an adjoint
    solve. $X$ denotes physical state fields obtained by solving the governing PDE at
    $\rho_{\mathrm{init}}$ (e.g., displacement, velocity, pressure), and $\mathcal{T}$ denotes a deterministic first-order optimizer mapping the
    initial sensitivity landscape to a target topology.}
%
%
This abstraction isolates the \emph{information pathway} through which problem
parameters influence the optimal design.

\subsection{Markov Structure and Modeling Assumptions}

We formalize the abstraction above via the following assumption.

\begin{assumption}[Causal Markov Structure]
\label{assump:markov_theory}
The random variables $(\Theta, X, S_0, \rho^\star)$ satisfy the following conditional
independencies:
\vspace{-2pt}
\begin{enumerate}
    \item $\Theta \rightarrow X \rightarrow S_0$
    \quad (sensitivities are computed deterministically from physical fields);
    \item $X \rightarrow S_0 \rightarrow \rho^\star$
    \quad (the optimizer acts on the sensitivity field and constraints, not directly
    on raw state fields);
    \item $\Theta \rightarrow X \rightarrow \rho^\star$
    \quad (problem parameters influence the design only through the induced physical
    solution).
\end{enumerate}
\end{assumption}
\vspace{-1pt}
Together, these relations define the Markov chain
\begin{equation}
\Theta \;\rightarrow\; X \;\rightarrow\; S_0 \;\rightarrow\; \rho^\star .
\label{eq:markov_chain}
\end{equation}
Figure~\ref{fig:dpi} illustrates this information flow and the resulting entropy ordering.

\subsection{Information-Theoretic Limits of Conditioning Signals}

We now quantify how much information different conditioning signals can provide about
the optimal topology.

\begin{theorem}[Sensitivity as Information-Theoretically Optimal Conditioning]
\label{thm:sensitivity_dpi}
Under Assumption~\ref{assump:markov_theory}, the mutual information between the optimal
topology $\rho^\star$ and candidate conditioning signals satisfies:
\begin{equation}
I(\rho^\star;\Theta)
\;\le\;
I(\rho^\star;X)
\;\le\;
I(\rho^\star;S_0).
\label{eq:mi_ordering}
\end{equation}
Equivalently, the conditional entropies obey:
\begin{equation}
H(\rho^\star \mid S_0)
\;\le\;
H(\rho^\star \mid X)
\;\le\;
H(\rho^\star \mid \Theta).
\label{eq:entropy_ordering}
\end{equation}
\end{theorem}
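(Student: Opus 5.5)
The plan is to apply the Data Processing Inequality (DPI) twice along the Markov chain \eqref{eq:markov_chain}. I will then translate each resulting mutual-information inequality into a conditional-entropy inequality through the identity $I(\rho^\star;Z)=H(\rho^\star)-H(\rho^\star\mid Z)$.

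Before doing so, I would fix the setting so that the quantities are well defined. Take $\rho^\star$ to be discrete, namely binary on a finite mesh, $\rho^\star\in\{0,1\}^N$. Then $H(\rho^\star)\le N\log 2<\infty$. For $\Theta$, $X$ and $S_0$, which may be continuous, mutual information is defined in the general (Gelfand--Yaglom/Pinsker) sense. All mutual informations involving $\rho^\star$ are bounded by $H(\rho^\star)$, so they are finite, and every subtraction below is legitimate.

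\textbf{First inequality, $I(\rho^\star;\Theta)\le I(\rho^\star;X)$.} Item 3 of Assumption~\ref{assump:markov_theory} gives $\Theta\to X\to\rho^\star$, that is, $\Theta\perp\rho^\star\mid X$. Expanding $I(\rho^\star;\Theta,X)$ with the chain rule in two orders gives
\[
I(\rho^\star;X)+I(\rho^\star;\Theta\mid X)=I(\rho^\star;\Theta)+I(\rho^\star;X\mid\Theta).
\]
Conditional independence makes $I(\rho^\star;\Theta\mid X)=0$. Nonnegativity gives $I(\rho^\star;X\mid\Theta)\ge 0$. Together these yield $I(\rho^\star;\Theta)\le I(\rho^\star;X)$.

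\textbf{Second inequality, $I(\rho^\star;X)\le I(\rho^\star;S_0)$.} Item 2 gives $X\to S_0\to\rho^\star$. The same chain-rule expansion, applied to $I(\rho^\star;X,S_0)$ with $I(\rho^\star;X\mid S_0)=0$, gives the result. Item 1 is not needed for these two steps. It only certifies that the whole chain \eqref{eq:markov_chain} is consistent, for instance because $S_0=\mathcal{A}(X)$ is a deterministic function of $X$.

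For \eqref{eq:entropy_ordering}, I would subtract each inequality in \eqref{eq:mi_ordering} from the finite constant $H(\rho^\star)$, which reverses the ordering.

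The main obstacle is measure-theoretic, not combinatorial. The conditioning variables live in function spaces, so differential entropies of $X$ and $S_0$ may be undefined. For this reason I would deliberately avoid any expression like $H(X)$ and work only with $I(\rho^\star;\cdot)$ and $H(\rho^\star\mid\cdot)$. The DPI and the chain rule hold in the general setting, for example by approximating with finite partitions and taking suprema. Alternatively, one can simply assume a finite discretization, which matches the finite-element setting of the paper.
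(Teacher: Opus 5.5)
Your proposal is correct and is essentially the paper's proof: the DPI applied along $\Theta\to X\to\rho^\star$ (item 3) and along $X\to S_0\to\rho^\star$ (item 2), followed by $H(\rho^\star\mid Z)=H(\rho^\star)-I(\rho^\star;Z)$ to reverse the ordering. Your chain-rule derivation of the DPI and the finiteness remark $H(\rho^\star)\le N\log 2<\infty$ just make explicit the rigor that the paper leaves implicit.
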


\begin{proof}
By Assumption~\ref{assump:markov_theory}, the variables form the Markov chain
$\Theta \rightarrow X \rightarrow S_0 \rightarrow \rho^\star$.
Applying the Data Processing Inequality twice yields
$I(\Theta;\rho^\star) \le I(X;\rho^\star) \le I(S_0;\rho^\star)$.
The entropy ordering follows from
$H(\rho^\star \mid Z) = H(\rho^\star) - I(\rho^\star; Z)$.
\end{proof}

\paragraph{Interpretation.}
Theorem~\ref{thm:sensitivity_dpi} does not claim that conditioning on sensitivities
guarantees perfect prediction.
Rather, it establishes a \emph{fundamental lower bound} on irreducible uncertainty:
among all admissible conditioning signals derived from the problem setup, the
sensitivity field minimizes the Bayes-optimal prediction error for $\rho^\star$. The proof is detailed in Appendix \ref{proof:dpi}.
\begin{figure*}[t]
\centering
\includegraphics[width=\textwidth]{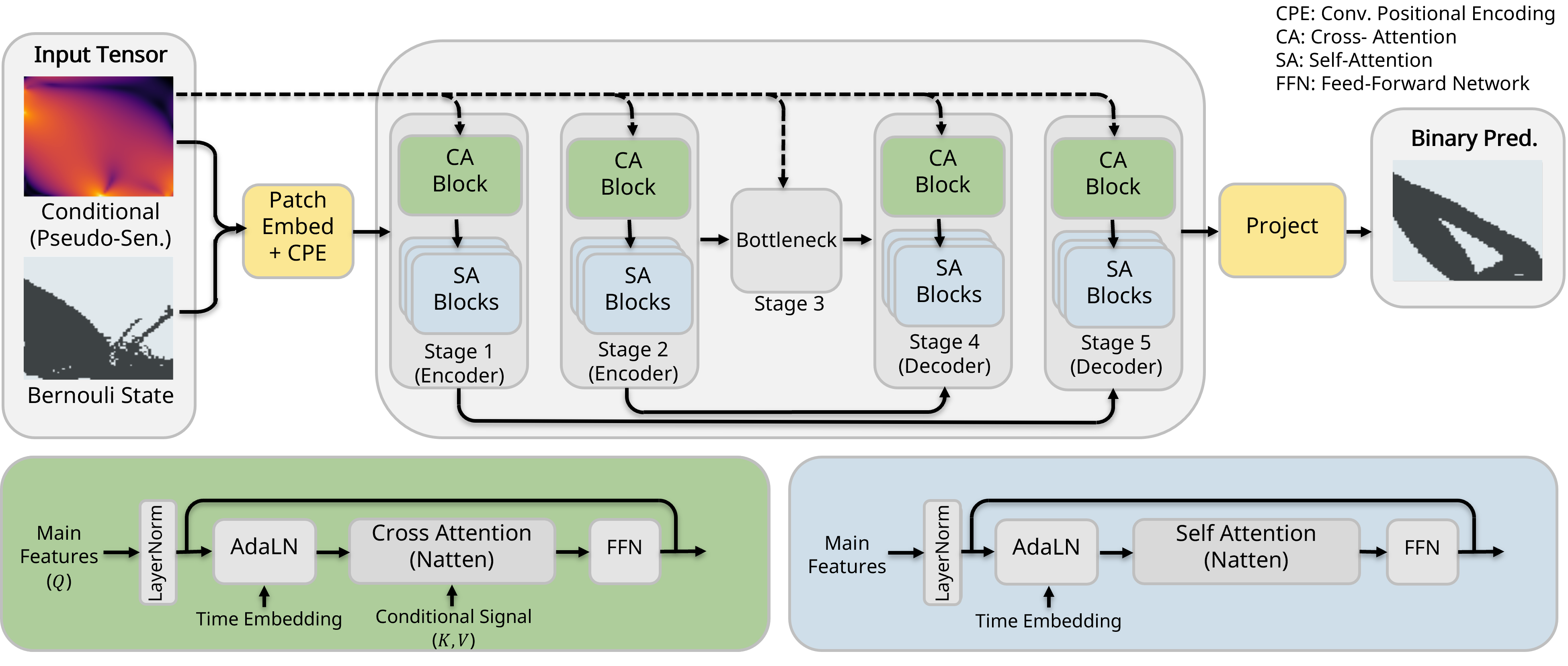}
\caption{Network architecture. A hierarchical vision transformer with encoder-decoder structure processes the conditioning field (sensitivity or pseudo-sensitivity) and noisy Bernoulli state $x_t$. Cross-attention blocks at each resolution allow the sensitivity field to modulate feature updates, while AdaLN injects timestep information in addition to main features.}
\label{fig:arch}
\end{figure*}
\subsection{Pseudo-Sensitivities: When Physical Fields Are Sufficient}

While $S_0$ is optimal, computing exact adjoint sensitivities may be expensive or
unavailable. We therefore ask when physical fields can serve as effective substitutes.

\begin{definition}[Pseudo-Sensitivity]
\label{def:pseudo_sensitivity}
Let $X:\Omega \rightarrow \R$ be a scalar physical field.
We say that $X$ is an $\epsilon$-approximate \textbf{pseudo-sensitivity} for $S_0$ if
there exists a monotone function $h:\R \rightarrow \R$ such that
\begin{equation}
\| S_0 - h(X) \|_{L^2(\Omega)}^2 \le \epsilon.
\label{eq:pseudo_sensitivity}
\end{equation}
For vector-valued fields, $X$ is first reduced to a scalar invariant (e.g.,
$\|\bm{v}\|^2$).
\end{definition}

\yunjia{Here, we provide several candidates of the pseudo-sensitivity for both fluid and structural problems.}
\begin{proposition}[Hierarchy of Physical Fields]
\label{prop:field_hierarchy}
~\\[-1em]
\begin{enumerate}
    \item \textbf{Linear elasticity (compliance):}
    Strain Energy Density is a $0$-approximate pseudo-sensitivity:
    $S_0 \propto -\mathrm{SED}$. The derivation is detailed in Appendix \ref{proof:structrual_pseudo}.
    \item \textbf{Navier--Stokes energy dissipation:}
    Under the frozen turbulence assumption and high-Reynolds-number asymptotics,
    $S_0 \propto -\|\bm{v}\|^2$. The full derivation is detailed in Appendix \ref{proof:cfd_pseudo}.
    \item \textbf{Pressure and displacement:}
    In general, these are not pseudo-sensitivities, as their relationship to $S_0$
    involves nonlocal operators and tensorial contractions.
\end{enumerate}
\end{proposition}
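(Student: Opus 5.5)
The plan is to treat the three items of Proposition~\ref{prop:field_hierarchy} separately. For each, I would compute the reduced gradient \eqref{eq:sensitivity} from the adjoint equation \eqref{eq:adjoint} at $\rho_{\mathrm{init}}$, and then read off whether $S_0$ is a pointwise monotone function of a scalar physical field in the sense of Definition~\ref{def:pseudo_sensitivity}.

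\textbf{Item 1 (compliance).} I would use the SIMP residual
\[
\Res(u,\rho)=K(\rho)u-f, \qquad K(\rho)=\int_\Omega E(\rho)\,B^\top C_0 B,
\]
with $E(\rho)=E_{\min}+\rho^p(E_0-E_{\min})$ and objective $\Obj=f^\top u$.
\begin{itemize}
\item Since $\partial\Obj/\partial u=f$ and $K$ is symmetric, the adjoint equation gives $\lambda=u$. This is the self-adjointness of compliance.
\item Because $\partial\Obj/\partial\rho=0$ and $f$ does not depend on $\rho$, \eqref{eq:sensitivity} reduces to $S_0(x)=-u^\top \partial_\rho K\, u=-E'(\rho_{\mathrm{init}})\,\varepsilon(u):C_0:\varepsilon(u)$.
\item For a uniform initial design, $E'(\rho_{\mathrm{init}})/E(\rho_{\mathrm{init}})$ is a constant, so $S_0=-c\,\mathrm{SED}$ with $c=2E'/E>0$.
\item Taking $h(t)=-ct$, which is monotone, gives $\epsilon=0$.
\end{itemize}

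\textbf{Item 2 (energy dissipation).} I would use a Brinkman-penalized RANS system with penalty $\alpha(\rho)\bm v$, and take the objective to be the total dissipation, including the Brinkman term $\int\alpha(\rho)\|\bm v\|^2$.
\begin{itemize}
\item Under frozen turbulence, $\nu_t$ is held fixed, so the adjoint system is the linearized momentum and continuity equations. The only explicit $\rho$-dependence is through $\alpha$.
\item The sensitivity then takes the form $S_0=\alpha'(\rho)\bigl(\|\bm v\|^2 + \bm v\cdot\bm w\bigr)$, where $\bm w$ is the adjoint velocity. The sign convention depends on whether $\rho$ denotes fluid or solid.
\item The key asymptotic step is to argue that at high Reynolds number, where convection dominates, the adjoint velocity for a dissipation objective aligns with the primal one: $\bm w\approx\kappa\bm v$ with $\kappa$ roughly constant. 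The justification is that the adjoint source is $\partial\Obj/\partial\bm v\propto\bm v$ plus convective terms, and the adjoint of a dissipation functional is known to approximate $-\bm v$ up to scale.
\item This gives $S_0\propto -\|\bm v\|^2$, and again $h$ is linear.
\end{itemize}
This alignment approximation is the main obstacle. It is not exact, and the resulting $\epsilon$ is only small heuristically. I would state it as an explicit asymptotic assumption rather than try to derive a bound. I would also try to control the residual $\|\alpha'\bm v\cdot(\bm w-\kappa\bm v)\|_{L^2}$ by the adjoint residual.

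\textbf{Item 3 (pressure and displacement).} Here I would argue by structure.
\begin{itemize}
\item In elasticity, $S_0$ depends on $\nabla u$ through a quadratic contraction. Two points with equal $|u|$ can have different strains, for example under a rigid translation, which changes $u$ but not $S_0$. Hence no function of $|u|$, and in particular no monotone one, can reproduce $S_0$.
\item Similarly, pressure satisfies the Poisson equation $\Delta p=-\nabla\cdot(\bm v\cdot\nabla\bm v)+\dots$. So $p$ is a nonlocal functional of $\bm v$, and adding a constant to $p$ leaves $S_0$ unchanged.
\item A simple counterexample in each case, such as Couette or channel flow where $p$ varies linearly while $\|\bm v\|$ is constant in the streamwise direction, shows that no monotone $h$ achieves a small $\epsilon$ in general.
\end{itemize}
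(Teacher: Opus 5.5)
Your items 1 and 3 are fine. Item 1 is the paper's self-adjoint compliance computation: $\lambda=u$ with the minus sign in \eqref{eq:sensitivity} is equivalent to the appendix's $\lambda=-u$ with a plus sign. Item 3 says more than the paper, which only asserts the nonlocal/contraction obstruction. One caveat: Definition~\ref{def:pseudo_sensitivity} lets $h$ depend on the instance. Rigid translations and constant pressure shifts change the instance (and are excluded by the supports and the zero-pressure outlet), so they prove nothing. The decisive argument must be within one instance. For example, across a Poiseuille channel $p$ is constant while $\|\bm{v}\|^2$ varies, so every $h(p)$ misses the cross-sectional variance of $S_0$. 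The streamwise direction alone is not decisive, since a monotone $h$ may be constant on an interval.

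The genuine gap is in item 2, where your key step fails as written. Your formula $S_0=\alpha'(\rho)\,(\|\bm{v}\|^2+\bm{v}\cdot\bm{w})$ is the Lagrangian convention, whose adjoint equation carries the source $-\partial_{\bm{v}}\Obj$. Inserting the approximation you invoke, $\bm{w}\approx-\bm{v}$, makes the two terms cancel, so $S_0\approx 0$. With $\bm{w}\approx-c\,\bm{v}$, the sign of $S_0$ relative to $\alpha'$ depends on whether $c<1$, which you never establish. Your text also says both that $\bm{w}$ ``aligns with'' $\bm{v}$ and that it approximates $-\bm{v}$. The fluid/solid convention flips only $\alpha'$, not the relative sign inside the bracket. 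The paper uses $(\partial_{\bm{v}}\Res)^*\bm{w}=+\partial_{\bm{v}}\Obj$ and $S=\alpha'(\|\bm{v}\|^2-\bm{w}\cdot\bm{v})$, so its $\bm{w}=-\bm{v}$ doubles the explicit term. In your convention that same solution is $\bm{w}=+\bm{v}$.

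Your justification for alignment also fails. Substituting $\bm{w}=-\bm{v}$ into \eqref{eq:ns_adjoint} and using the primal equation leaves a residual $4(\bm{v}\cdot\nabla)\bm{v}$ modulo gradients. That residual vanishes only in special cases such as irrotational flow. So convection dominance is exactly where the adjoint, transported upstream by $-(\bm{v}\cdot\nabla)\bm{w}$, stops being collinear with $\bm{v}$.

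The missing idea is the inertia-free Brinkman--Stokes structure. The paper verifies $\bm{w}=-\bm{v}$ by substitution in the Stokes limit. It reaches high $\mathrm{Re}$ by arguing that the local channel Reynolds number $\mathrm{Re}_h$ is small in optimized designs under frozen turbulence, not through convection dominance.

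In that limit there is a cleaner, convention-free route. With velocity data on inlets and walls and a do-nothing outlet, the state minimizes $\tfrac{1}{2}\Obj$ over a $\rho$-independent admissible set. The envelope theorem then gives $S=\alpha'(\rho)\,\|\bm{v}\|^2$ exactly. In fact every multiple $c\,\bm{v}$ solves the interior adjoint equations there (with a multiple of $p$ as adjoint pressure), and the homogeneous adjoint boundary data select $\bm{w}=0$. Either way $S_0\propto-\|\bm{v}\|^2$.

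Stating the high-$\mathrm{Re}$ alignment as an explicit assumption is no less rigorous than the paper's scaling argument. But it needs a consistent sign and should be tied to this local-Stokes regime.
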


\paragraph{Consequences for Generalization.}
If $X$ is an $\epsilon$-approximate pseudo-sensitivity, then
\begin{equation}
H(\rho^\star \mid X)
=
H(\rho^\star \mid S_0) + \mathcal{O}(\sqrt{\epsilon}),
\label{eq:entropy_gap}
\end{equation}
and by Fano’s inequality, the minimum achievable prediction error increases smoothly
with $\epsilon$.\\
This explains the success of TopoDiff~\cite{maze2022topodiff} (which used SED), the limited generalization of methods conditioned on physical parameters~\cite{nobari2024nito}, and predicts failure for arbitrary physical fields. For CFD, the pseudo-sensitivity relationship emerges from high-Reynolds asymptotics combined with frozen turbulence (Appendix~\ref{proof:cfd_pseudo}).

\section{Method: Sensitivity-Conditioned Bernoulli Flow Matching}

We generate binary topologies using Bernoulli flow matching~\cite{gat2024bernoulli} conditioned on sensitivity fields. Unlike continuous diffusion, this formulation operates directly in Bernoulli parameter space, avoiding instabilities near the simplex boundary that arise with naive velocity regression.

\subsection{Bernoulli Probability Path}

Let $x_1 \in \{0,1\}^N$ denote the target topology. We define an uninformative prior $\alpha_{\mathrm{prior}}=0.5$ and a time-indexed family of Bernoulli marginals $\alpha_t \in [0,1]^N$ with $x_t \sim \mathrm{Bernoulli}(\alpha_t)$. Following~\cite{gat2024bernoulli}, we use a linear probability path
\begin{equation}
\alpha_t(x_1) = (1-t)\cdot 0.5 + t \cdot x_1, \qquad t\in[0,1],
\label{eq:bernoulli_path}
\end{equation}
interpolating from the uninformative prior to the target binary topology.

\subsection{Training Objective}

Rather than regressing a probability-space vector field, we train a conditional denoiser $f_\theta(x_t, t, S_0)$ that predicts the clean topology given a noisy sample $x_t$, time $t$, and conditioning signal $S_0$. The network outputs per-cell probabilities $\hat{\rho} = p_\theta(x_1{=}1 \mid x_t, t, S_0)$ and is trained with binary cross-entropy:
\begin{equation}
\mathcal{L}(\theta)=\mathbb{E}_{t,x_1,x_t}\Big[\mathrm{BCE}\big(f_\theta(x_t,t,S_0),x_1\big)\Big].
\end{equation}
We use 50 sampling steps with linear scheduling. At inference, we replace the final stochastic step with a deterministic greedy projection to produce clean, simulation-ready topologies (Appendix~\ref{sec:greedy_sampling}), and optionally enforce volume fraction constraints via confidence-based progressive pruning (Appendix~\ref{sec:volume_constraint}). Full training hyperparameters and sampling details are provided in Appendix~\ref{app:training}.

\subsection{Architecture}

\begin{table*}[t]
\centering
\footnotesize
\setlength{\tabcolsep}{4.5pt}
\caption{Simulation-level performance comparison across ID and OOD test sets. All learned models are conditioned on sensitivities. $\mathrm{\mathbf{Err}}_{\Delta p}$: Mean Relative Pressure Drop Error (\%) $\pm$ Std. \textbf{Med.}: Median Error (\%). \textbf{Acc.}: Accuracy (\% within 10\% error). STAR-CCM+ represents one optimization iteration. Best learned model results are \textbf{bolded}.}
\label{tab:combined_results}
\begin{tabular}{l ccc c ccc c ccc}
\toprule
 & \multicolumn{3}{c}{\textbf{ID Test}} & & \multicolumn{3}{c}{\textbf{OOD-Medium (2 Outlets)}} & & \multicolumn{3}{c}{\textbf{OOD-Hard (3 Outlets)}} \\
\cmidrule(lr){2-4} \cmidrule(lr){6-8} \cmidrule(lr){10-12}
\textbf{Model} & \textbf{$\mathrm{\mathbf{Err}}_{\Delta p} \pm$ Std} & \textbf{Med.} & \textbf{Acc.} & & \textbf{$\mathrm{\mathbf{Err}}_{\Delta p} \pm$ Std} & \textbf{Med.} & \textbf{Acc.} & & \textbf{$\mathrm{\mathbf{Err}}_{\Delta p} \pm$ Std} & \textbf{Med.} & \textbf{Acc.} \\
\midrule
STAR-CCM+ (1 iter) & $14.39 \pm 21.49$ & $10.28$ & $49.0$ & & $21.94 \pm 27.58$ & $15.86$ & $41.0$ & & $26.51 \pm 32.33$ & $18.47$ & $40.0$ \\
\midrule
DiT & $2.59 \pm 4.21$ & $\mathbf{1.45}$ & $96.4$ & & $8.58 \pm 14.58$ & $3.66$ & $\mathbf{79.4}$ & & $9.06 \pm 12.66$ & $4.85$ & $70.6$ \\
UDiT & $\mathbf{2.37 \pm 4.26}$ & $1.65$ & $\mathbf{97.2}$ & & $8.62 \pm 15.77$ & $4.84$ & $75.4$ & & $13.03 \pm 21.71$ & $7.00$ & $62.7$ \\
PDE-T & $2.47 \pm 4.82$ & $1.82$ & $96.8$ & & $7.50 \pm 14.23$ & $3.78$ & $75.5$ & & $9.33 \pm 17.68$ & $4.35$ & $68.6$ \\
\textbf{Ours} & $3.68 \pm 6.71$ & $2.63$ & $92.6$ & & $\mathbf{6.38 \pm 12.64}$ & $\mathbf{3.54}$ & $72.3$ & & $\mathbf{6.12 \pm 13.77}$ & $\mathbf{2.70}$ & $\mathbf{74.5}$ \\
\bottomrule
\end{tabular}
\end{table*}

Figure~\ref{fig:arch} illustrates the network architecture. We use a hierarchical vision transformer with an encoder-decoder structure: two encoder stages, a bottleneck, and two decoder stages connected by skip connections. The input consists of two channels: The conditioning field (sensitivity or pseudo-sensitivity) and the noisy Bernoulli state $x_t$which are embedded via $4{\times}4$ patches.

Each stage contains a cross-attention block followed by self-attention blocks, all using Neighborhood Attention (NATTEN) \cite{hassani2023neighborhood}. In cross-attention blocks, queries derive from the main feature stream while keys and values come from the conditioning signal, allowing the sensitivity field to directly modulate feature updates at every resolution. Timestep information is injected via adaptive layer normalization (AdaLN) \cite{DBLP:journals/corr/GoyalDGNWKTJH17,dhariwal2021diffusion}, which modulates the scale and shift of each block's activations. The decoder outputs per-cell logits that are unpatchified to the original resolution.

\subsection{Deployment Enhancements}

Beyond the generative model itself, we develop practical extensions for deployment: a greedy terminal sampling step that eliminates salt-and-pepper artifacts for simulation-ready topologies (Appendix~\ref{sec:greedy_sampling}), and a confidence-based progressive pruning strategy for volume fraction control during sampling (Appendix~\ref{sec:volume_constraint}).

\section{Experiments}

We evaluate our sensitivity-conditioned Bernoulli Flow Matching framework on turbulent CFD topology optimization and structural compliance problems. Our experiments are designed to: (i) empirically validate the information-theoretic ordering predicted by Theorem~\ref{thm:sensitivity_dpi}, (ii) demonstrate the practical benefits of pseudo-sensitivity conditioning for OOD generalization, and (iii) compare against state-of-the-art baselines.

\subsection{CFD Topology Optimization}

\paragraph{Dataset.}
\mohammadd{
We introduce a CFD-TO dataset of 2D turbulent channel flows (steady RANS, $k$--$\varepsilon$, STAR-CCM+~\cite{starccm}) minimizing energy dissipation: it consists of 10k training samples (single-outlet), 1k \textit{ID} test samples, 500 \textit{OOD-medium} samples (2 outlets), and 500 \textit{OOD-hard} samples (3 outlets); each with varied density and viscosity. Refer to Appendix \ref{app:cfd_dataset} for details.
}

\paragraph{Baselines.}
We compare against UDiT \cite{udit}, DiT \cite{dit}, and PDE-Transformer (PDE-T) \cite{DBLP:conf/icml/Holzschuh0KT25}, all conditioned on sensitivity fields and all using the small variation of the respective models. We also include STAR-CCM+ optimization after one iteration as a classical baseline.

\subsubsection{Validating the \nils{Theory}} 

To test Theorem~\ref{thm:sensitivity_dpi} and Proposition~\ref{prop:field_hierarchy}, we evaluate models conditioned on different signals using cross-entropy as a proxy for conditional entropy (see Appendix~\ref{app:bce_proxy}). Figure~\ref{fig:entropy_threeoutlet} shows results on the OOD-hard (three-outlet) test set.
\begin{figure}[h]
\centering
\includegraphics[width=\columnwidth]{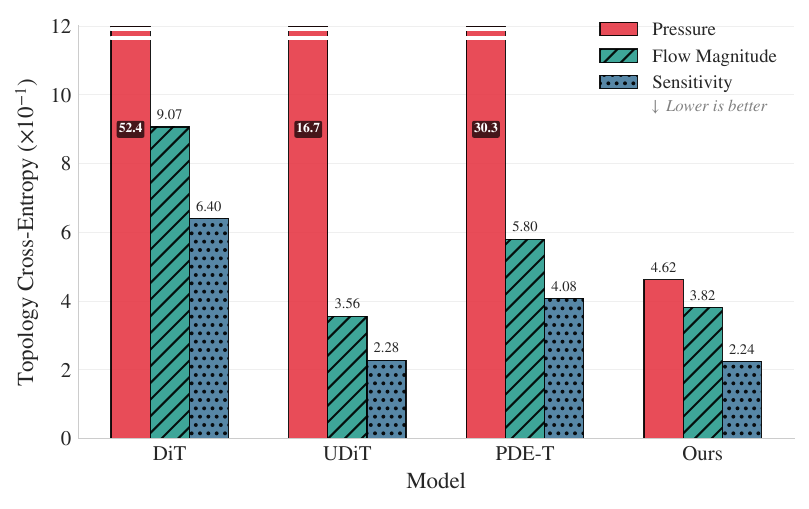}
\caption{OOD generalization on three-outlet configurations. Models conditioned on sensitivity or pseudo-sensitivity maintain low cross-entropy under distribution shift, while pressure-based conditioning degrades.}\vspace{-20pt}
\label{fig:entropy_threeoutlet}
\end{figure}
The empirical ordering is consistent with our theoretical predictions: sensitivity conditioning achieves the lowest cross-entropy, followed by pseudo-sensitivity (velocity magnitude squared), with non-pseudo fields (pressure) exhibiting the highest uncertainty. This validates the Markov chain abstraction and confirms that sensitivity proximity governs predictive performance.

\subsubsection{Model Comparison on  Simulation-Level Performance}
Table~\ref{tab:combined_results} compares all models trained with sensitivity conditioning on simulation-level metrics\mohammadd{, reporting relative error in pressure drop (see Appendix~\ref{app:metric_defs})}. We report mean relative error, median error, and accuracy (fraction of predictions within 10\% of \nils{target}).
All learned models conditioned on sensitivities outperform STAR-CCM+ after one iteration, demonstrating successful generalization to OOD settings. While UDiT and PDE-T achieve slightly better ID performance, our Bernoulli flow model exhibits the best OOD generalization, particularly on the OOD-hard (three-outlet) setting where it achieves 74.5\% accuracy compared to 68.6\% for PDE-T and 62.7\% for UDiT.

\subsubsection{Non-Pseudo Physical Fields Fail OOD}

Table~\ref{tab:conditioning_cfd} compares conditioning modalities to validate Proposition~\ref{prop:field_hierarchy}. While pressure-based conditioning achieves reasonable ID performance, it degrades substantially under OOD shifts, with median error increasing from 3.34\% to 11.51\%. In contrast, sensitivity and pseudo-sensitivity conditioning maintain stable performance across distribution shifts.
\begin{table}[bt]
\centering
\caption{Conditioning modality comparison (CFD). Non-pseudo fields (pressure) generalize ID but fail OOD. $\mathrm{\mathbf{Err}}_{\Delta p}$: Mean Relative Pressure Drop Error (\%) $\pm$ Std. Med.: Median Error (\%).}
\label{tab:conditioning_cfd}
\resizebox{\linewidth}{!}{%
\begin{tabular}{l|cc|cc}
\toprule
\multirow{2}{*}{\textbf{Modality}} & \multicolumn{2}{c|}{\textbf{In-Distribution}} & \multicolumn{2}{c}{\textbf{Out-of-Distribution}} \\
 & $\mathrm{Err}_{\Delta p} \pm$ Std & Med. & $\mathrm{Err}_{\Delta p} \pm$ Std & Med. \\ \midrule
Sensitivity & $4.16 \pm 9.20$ & $2.47$ & $5.92 \pm 15.55$ & $2.23$ \\
Pseudo-sens. & $2.69 \pm 7.09$ & $1.99$ & $1.15 \pm 24.26$ & $1.94$ \\
Pressure & $6.22 \pm 13.8$ & $3.34$ & $12.26 \pm 48.30$ & $11.51$ \\ \bottomrule
\end{tabular}%
}
\end{table}

\begin{mohammad}
\subsubsection{Qualitative Results}

Figure~\ref{fig:qualitative_cfd} compares conditioning signals on OOD three-outlet cases. Row (b) highlights the key regime shift: sensitivity and pseudo-sensitivity conditioning recover separated channel topologies that connect distinct outlets, a structure absent from the single-outlet training distribution (which only exhibits single connected flow paths). Across rows (a) and (c), sensitivity conditioning aligns most closely with the ground-truth geometry, while pseudo-sensitivity tends to smooth or slightly deform channel boundaries. Pressure conditioning fails in all cases, producing layouts that do not respect inlet/outlet placement and thus violate boundary conditions.

\begin{figure}[tb]
\centering
\includegraphics[width=\columnwidth]{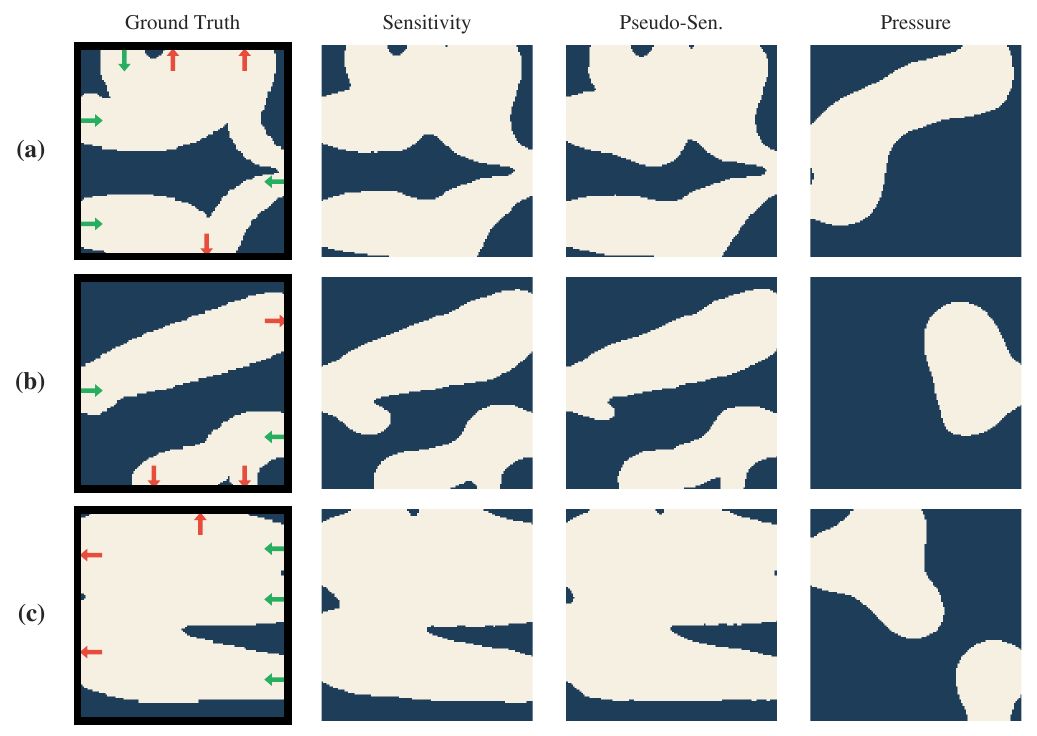}
\caption{CFD Topology Optimization on three-outlet OOD test case qualitative comparison. Sensitivity best matches the ground truth; pseudo-sensitivity generalizes but with mild geometric smoothing. Row (b) shows separated channels that do not appear in the single-outlet training set. Pressure conditioning ignores outlet constraints.\textcolor{green!50!black!50}{Green}: inflow. \textcolor{red}{Red}: outflow.}
\label{fig:qualitative_cfd}
\end{figure}
\end{mohammad}

\subsubsection{Conditioning Dominates Architecture}

To disentangle the effect of conditioning from the generative architecture, we fix one factor and vary the other on simulation-level OOD-Hard metrics. Table~\ref{tab:disentangle} shows that a weaker model with informative conditioning consistently outperforms a stronger model with an information-poor signal. UDiT conditioned on pseudo-sensitivity achieves $7.0\%$ median pressure-drop error, while our model with pressure conditioning reaches $100\%$, an order of magnitude worse despite the stronger architecture. This confirms that the conditioning signal, not the model capacity, is the primary driver of OOD performance.

\begin{table}[bt]
\centering
\caption{Conditioning vs.\ architecture on simulation-level CFD metrics (OOD-Hard, 3 outlets). Median relative pressure-drop error (\%). Rows compare the same architecture under different conditioning; columns compare different architectures under the same conditioning.}
\label{tab:disentangle}
\begin{tabular}{l@{\hskip 12pt}c@{\hskip 12pt}c}
\toprule
 & \multicolumn{2}{c}{\textbf{Architecture}} \\
\cmidrule(lr){2-3}
\textbf{Conditioning} & UDiT & \textbf{Ours (BFM)} \\
\midrule
Pseudo-sens.\ ($\|\mathbf{v}\|^2$) & $7.0$ & $\mathbf{2.7}$ \\
Non-pseudo (pressure) & $68.0$ & $100.0$ \\
\bottomrule
\end{tabular}
\end{table}

\subsection{Structural Compliance Optimization}

\paragraph{Dataset.}
We use the established structural topology optimization benchmark from TopoDiffusion~\cite{maze2022topodiff} and NITO~\cite{nobari2024nito}, consisting of 2D cantilever and bridge-like structures under linear elasticity with compliance minimization. The dataset includes diverse load configurations with OOD test cases containing unseen loading patterns.

\paragraph{Baselines.}
\nils{We compare against the corresponding models, TopoDiffusion, a diffusion model conditioned on strain/stress energy density, and NITO, a deterministic implicit network conditioned on physical parameters.} 

\subsubsection{Near-Zero Gap Between SED and Sensitivity}

For structural compliance, Proposition~\ref{prop:field_hierarchy} establishes that strain energy density (SED) is a $0$-approximate pseudo-sensitivity. Figure~\ref{fig:entropy_conditioning} confirms this prediction: the cross-entropy gap between sensitivity and SED conditioning is negligible, while conditioning on non-pseudo fields (displacement) or upstream physical parameters yields substantially higher uncertainty.

\begin{figure}[bt]
\centering
\includegraphics[width=\columnwidth]{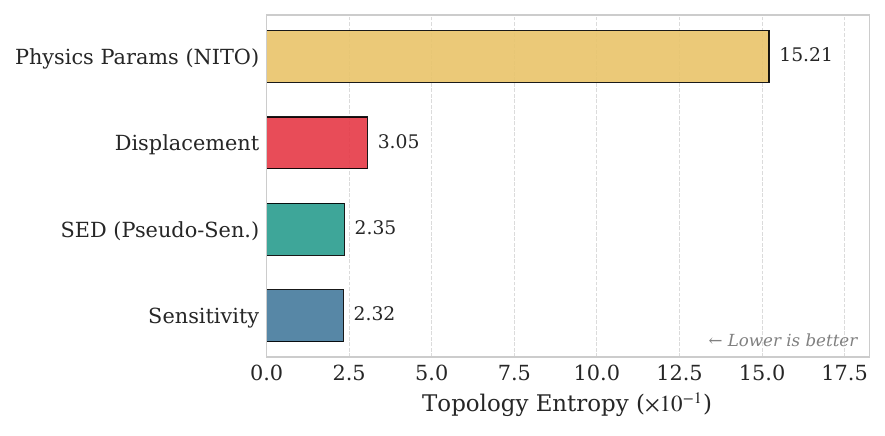}
\caption{Cross-entropy across conditioning signals for structural problems. Sensitivity and SED (pseudo-sensitivity) achieve nearly identical CE, while displacement and physical parameters exhibit higher uncertainty, consistent with Proposition~\ref{prop:field_hierarchy}. CE computed with logit temperature $T{=}15$.}
\label{fig:entropy_conditioning}
\end{figure}

\begin{table}[bt]
\centering
\caption{Cross-entropy and compliance error comparison for structural problems. SED (pseudo-sensitivity) matches true sensitivity, validating Proposition~\ref{prop:field_hierarchy}. CE is computed with logit temperature $T{=}15$. \textbf{CE}: Cross Entropy. $\mathrm{\mathbf{Err}}_{C}$ \textbf{Med.}: Median Compliance Error.  }
\label{tab:structural_ce}
\begin{tabular}{lcc}
\toprule
\textbf{Conditioning} & \textbf{CE} ($\times 10^{-1}$) & \textbf{$\mathrm{\mathbf{Err}}_{C}$ Med.(\%)} \\
\midrule
Sensitivity & $2.32$ & $0.54$ \\
SED (Pseudo-sens.) & $2.35$ & $0.53$ \\
Displacement & $3.05$ & $1.30$ \\
Physical Params & $15.21$ & $14.0$ \\
\bottomrule
\end{tabular}
\end{table}

\mohammadd{
Table~\ref{tab:structural_ce} quantifies this ordering using both cross-entropy and median compliance error (see Appendix~\ref{app:metric_defs} for metric definitions). The near-identical performance for sensitivity and SED ($2.32\&0.54$ vs $2.35\&0.53$) confirm that SED preserves essentially all information about the optimal topology,} consistent with the $0$-approximate pseudo-sensitivity derivation in Appendix~\ref{proof:structrual_pseudo}. Physical parameter conditioning exhibits the highest cross-entropy, explaining the limited OOD generalization of methods like NITO that rely solely on boundary conditions and loads.
\begin{mohammad}
    
\subsubsection{Deterministic Models Can Generalize}
\begin{table}[bt]
\centering
\caption{Comparison of generative and deterministic surrogate models under different conditioning signals. Sensitivity conditioning enables strong generalization even for deterministic models, while conditioning on physical parameters alone leads to substantially higher compliance error.}
\label{tab:determintistic_ce}
\begin{tabular}{lcc}
\toprule
\textbf{Conditioning} & Model Type& \textbf{$\mathrm{\mathbf{Err}}_{C}$ Med.(\%)} \\
\midrule
Sensitivity & Generative & $0.54$ \\
Sensitivity & Deterministic & $2.27$ \\
Physical Params & Deterministic & $14.0$ \\
\bottomrule
\end{tabular}
\end{table}
A common speculation regarding the generalizability of surrogate topology optimization models is that it primarily stems from their generative nature. Table~\ref{tab:determintistic_ce} demonstrates that non-generative (deterministic) models can also generalize effectively when provided with informative conditioning. Notably, while the generative sensitivity-conditioned model achieves a lower median compliance error than its deterministic counterpart ($0.54\%$ vs.\ $2.27\%$), the deterministic sensitivity-conditioned model still outperforms the physics-parameter–conditioned deterministic model by a wide margin ($2.27\%$ vs.\ $14.0\%$). This gap underscores that informative conditioning is the dominant factor governing generalization performance even in the absence of a generative modeling framework. This supports Theorem~\ref{thm:sensitivity_dpi}, which shows that the choice of conditioning signal is more critical to generalization than whether the model is generative.
\end{mohammad}

\subsubsection{Comparison with State-of-the-Art}

\begin{table}[bt]
\centering
\small
\caption{Performance comparison on structural TO. FS: further SIMP steps after model output.}
\label{tab:structural_sota}
\begin{tabular}{l c c cc}
\toprule
\textbf{Model} & \textbf{Params (M)} & \textbf{FS} & \textbf{Mean} & \textbf{Med.} \\ 
\midrule
\textit{OOD testing} & & & \multicolumn{2}{c}{$\mathrm{\mathbf{Err}}_{C}$\% $\downarrow$} \\ 
\cmidrule(lr){4-5}
\textbf{Ours} & 34 & -- & \textbf{5.73} & \textbf{0.53} \\
TopoDiff & 121 & -- & 8.57 & 1.14 \\
TopoDiff w/ G & 239 & -- & 7.79 & 1.26 \\ 
NITO & \textbf{22} & 5 & 9.33 & 2.37 \\ 
\bottomrule
\end{tabular}
\end{table}

We compare our method against TopoDiffusion and NITO on OOD structural problems. 
Our sensitivity-conditioned Bernoulli flow model achieves state-of-the-art performance with 34M parameters, outperforming TopoDiffusion (121M parameters) by 33\% in mean compliance error ($\mathrm{\mathbf{Err}}_{C}$) and NITO (22M parameters) by 39\%, \nils{as shown in Table~\ref{tab:structural_sota}}. 
Notably, NITO requires 5 additional SIMP iterations after inference, while our method produces final topologies directly.
These results demonstrate that the combination of optimal conditioning (sensitivities) and appropriate generative modeling (Bernoulli flows for binary topology) yields substantial improvements over prior work in both accuracy and efficiency.

\begin{mohammad}
\subsubsection{Qualitative Results (Structural)}

Figure~\ref{fig:qualitative_structural} shows OOD structural examples.
As expected from Proposition~\ref{prop:field_hierarchy}, sensitivity and SED conditioning produce nearly indistinguishable topologies. Notably, displacement conditioning yields qualitatively plausible structures that capture the principal load paths. This suggests that displacement, while not a pseudo-sensitivity in the strict sense, retains sufficient structural information to approximate the sensitivity landscape (at the cost performance degradation). The progressive degradation from sensitivity through SED to displacement is consistent with Proposition~\ref{prop:field_hierarchy}.

\begin{figure}[tb]
\centering
\includegraphics[width=\columnwidth]{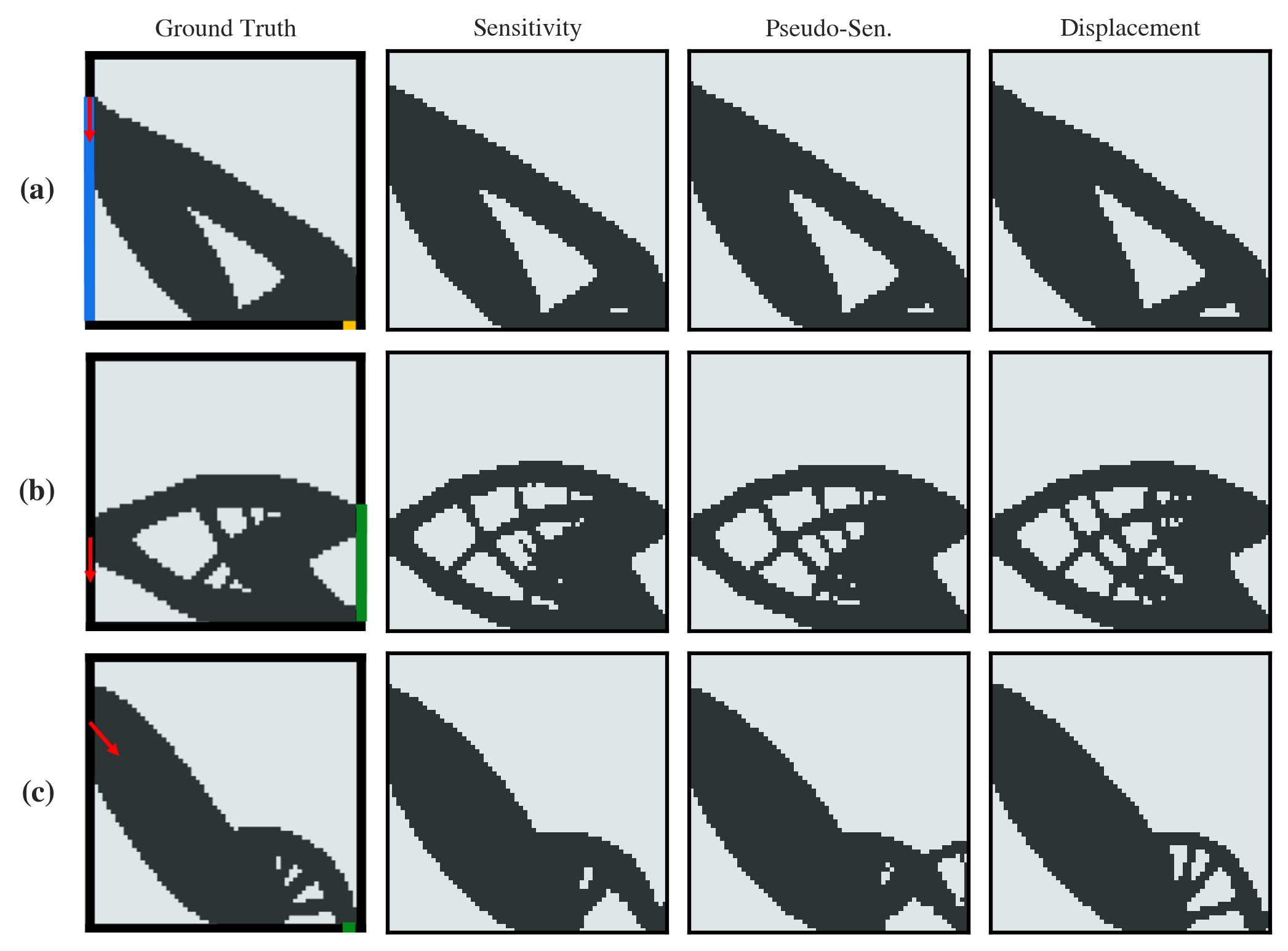}
\caption{OOD structural qualitative comparison. Sensitivity and SED (pseudo-sensitivity) are visually indistinguishable, while displacement conditioning produces less reliable topologies. \textcolor{blue!100!black!70}{Blue}: constrained in y-axis. \textcolor{yellow!80!black!75}{Yellow}: constrained in x-axis. \textcolor{green!50!black!50}{Green}: constrained in both axes. \textcolor{red}{Red}: applied force.}
\label{fig:qualitative_structural}
\end{figure}
\end{mohammad}

\subsubsection{Computational Cost}

Table~\ref{tab:speed} compares end-to-end wall-clock time. Our method requires a single primal PDE solve for conditioning ({\raise.17ex\hbox{$\scriptstyle\sim$}}3.2\,s for CFD) followed by near-instant neural inference (0.65\,ms per sample at batch size 256). This is {\raise.17ex\hbox{$\scriptstyle\sim$}}2$\times$ faster than NITO with 5 SIMP steps and orders of magnitude faster than classical iterative optimization in STAR-CCM+, which requires {\raise.17ex\hbox{$\scriptstyle\sim$}}114\,s for convergence.
Pseudo-sensitivity conditioning avoids the additional adjoint solve ({\raise.17ex\hbox{$\scriptstyle\sim$}}5.3\,s), reducing the total physics cost by $>$2.5$\times$ compared to exact sensitivities while retaining strong OOD performance.

\begin{table}[bt]
\centering
\small
\caption{Computational cost comparison (CFD). Ours replaces iterative optimization with a single primal solve and fast inference. Pseudo-sensitivities avoid the costlier adjoint computation.}
\label{tab:speed}
\begin{tabular}{lc}
\toprule
\textbf{Component} & \textbf{Time / sample} \\
\midrule
STAR-CCM+ full optimization & $113.94$ s \\
\quad Per SIMP iteration & $\sim$4.32 s \\
\midrule
NITO + 5 SIMP steps & $82.5$ ms \\
\midrule
Primal PDE solve (CFD) & $\sim$3.2 s \\
Adjoint solve (additional) & $\sim$5.3 s \\
\midrule
\textbf{Ours} (inference, batch 256) & $0.65$ ms \\
\textbf{Ours} (primal + inference) & $\sim$3.2 s \\
\textbf{Ours} (exact sens. + inference) & $\sim$8.5 s \\
\bottomrule
\end{tabular}
\end{table}

\section{Conclusions and Discussion}

We have presented a principled framework for generalization in topology optimization based on sensitivity-conditioned generative modeling. By combining theoretical insights, pseudo-sensitivity derivations, and discrete Bernoulli flow matching, we enable efficient and generalizable topology optimization and introduce a challenging CFD benchmark.

\paragraph{Scope and limitations.}
The Markov abstraction $\Theta \to X \to S_0 \to \rho^*$ is used as a presentation tool for a clean application of the DPI; the information ordering $H(\rho^* \mid S) \le H(\rho^* \mid X)$ holds more generally at each optimization iteration, since sensitivities are objective-aligned functionals of the state. The abstraction may weaken in settings with strong history dependence (e.g., multi-objective or path-dependent constraints), though empirically the initial sensitivity already captures the dominant topology structure (Appendix~\ref{app:auc_sufficiency}).

Our CFD pseudo-sensitivity derivation employs the frozen turbulence assumption, which is standard in steady RANS adjoint optimization~\cite{giles2000introduction,papoutsis2016continuous}. While this introduces approximation error, particularly in transitional Reynolds-number regimes, it was applied consistently in both data generation and sensitivity computation. Quantitative validation across our full dataset confirms robustness, with Pearson correlation between pseudo-sensitivity and true sensitivity exceeding $r = 0.95$ on average (Appendix~\ref{app:pseudo_validation}).

A natural limitation is that our method requires a physics solve for the conditioning signal. Pseudo-sensitivities mitigate this by relying only on primal simulations (avoiding the costlier adjoint), and benefit directly from ongoing advances in GPU-accelerated and neural PDE solvers. The current evaluation focuses on 2D structural compliance and 2D turbulent CFD; extending to 3D domains and additional PDE families is important future work.

\paragraph{Outlook.}
Our results indicate that generalization in topology optimization arises from sensitivity conditioning rather than the choice of generative mechanism. 
We expect this fundamental insight to be crucial for a wide variety of future, learning-based applications ranging from engineering design to medical scenarios.

\section*{Impact Statement}
\nils{This research aims to advance machine learning techniques for topology optimization. More efficient and accurate optimization capabilities can facilitate scientific and engineering progress in areas such as transportation design, aerodynamics and for medical procedures. We do not identify any immediate or specific negative societal impacts associated with the proposed methods.}

\bibliographystyle{icml2026}
\bibliography{references_updated}


\newpage
\appendix
\onecolumn

\section*{\begin{center}\Large Appendix\end{center}} 
\vspace{0.5cm}

\section{Proofs and Derivations}

\subsection{Proof of Data Processing Inequality for Conditioning Signals}
\label{proof:dpi}

Theorem~\ref{thm:sensitivity_dpi} follows directly from the Data Processing Inequality (DPI) applied to the Markov chain structure in Assumption~\ref{assump:markov_theory}.

\begin{proof}
By Assumption~\ref{assump:markov_theory}, the random variables form the Markov chain $\Theta \rightarrow X \rightarrow S_0 \rightarrow \rho^\star$. The Data Processing Inequality states that for any Markov chain $A \rightarrow B \rightarrow C$, we have $I(A; C) \le I(B; C)$.

Applying DPI to $\Theta \rightarrow X \rightarrow \rho^\star$ yields $I(\Theta; \rho^\star) \le I(X; \rho^\star)$.

Applying DPI to $X \rightarrow S_0 \rightarrow \rho^\star$ yields $I(X; \rho^\star) \le I(S_0; \rho^\star)$.

Combining these inequalities gives the mutual information ordering in Eq.~\eqref{eq:mi_ordering}. The conditional entropy ordering in Eq.~\eqref{eq:entropy_ordering} follows from the identity $H(\rho^\star \mid Z) = H(\rho^\star) - I(\rho^\star; Z)$, which shows that higher mutual information corresponds to lower conditional entropy.

This establishes that among all conditioning signals derivable from the problem parameters $\Theta$, the sensitivity field $S_0$ provides the tightest bound on the irreducible uncertainty in predicting the optimal topology $\rho^\star$.
\end{proof}

\subsection{BCE as a Mutual-Information Proxy}
\label{app:bce_proxy}

Here $Z:=\rho^\star\in\{0,1\}^{d}$ is the (binary) converged topology, and we compare conditioning signals $C\in\{X,S_0\}$ (physical fields vs.\ initial sensitivities) via their ability to predict $Z$.

Let $q_\phi(z\mid C)$ be a probabilistic decoder and define the population cross-entropy (log-loss) $\CE_C(\phi):=\E[-\log q_\phi(Z\mid C)]$. A standard decomposition gives
\begin{equation}
\begin{split}
\CE_C(\phi)
&= H(Z\mid C)
 + \E_C\!\Big[D_{\mathrm{KL}}\!\big(p(\cdot\mid C)\,\|\,q_\phi(\cdot\mid C)\big)\Big] \\
&\ge H(Z\mid C),
\end{split}
\label{eq:ce_upper_entropy}
\end{equation}
with equality if and only if $q_\phi(\cdot\mid C)=p(\cdot\mid C)$ almost surely.

Therefore the quantity $\widehat I_\phi(Z;C):=H(Z)-\CE_C(\phi)$ is a computable \emph{lower bound} on the mutual information $I(Z;C)=H(Z)-H(Z\mid C)$: $\widehat I_\phi(Z;C)\le I(Z;C)$.

In our experiments, we evaluate $\CE_X$ and $\CE_{S_0}$ using the same decoder family and training recipe; then $\CE_{S_0}<\CE_X$ implies a strictly larger information lower bound $\widehat I_\phi(Z;S_0)>\widehat I_\phi(Z;X)$. Log-loss is also a strictly proper scoring rule, making it suitable for comparing conditional uncertainty across signals.

\subsection{Sensitivity Sufficiency at the Optimum}
\label{app:sensitivity_sufficiency}

We connect the information-theoretic framework to classical optimality conditions.

\begin{proposition}[Sensitivity Sufficiency]
\label{prop:sensitivity_sufficiency}
Under standard KKT conditions for density-based topology optimization, the optimal design $\rho^\star$ is uniquely determined (up to measure-zero singular sets) by the terminal sensitivity field $S(\rho^\star)$ and the volume multiplier $\nu$.
\end{proposition}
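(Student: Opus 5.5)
We work directly from the KKT system of problem \eqref{eq:optimization}, with the PDE constraint eliminated through the adjoint, so that the reduced gradient is exactly the sensitivity field $S(\rho)$ of \eqref{eq:sensitivity}. We introduce the volume multiplier $\nu\ge 0$ and pointwise multipliers $\mu_0,\mu_1\ge 0$ for the box constraints $0\le\rho\le 1$. The Lagrangian is
\[
L(\rho,\nu,\mu_0,\mu_1)=\Obj(u(\rho),\rho)+\nu\Big(\int_\Omega\rho\,dx-\Vol_{req}\Big)-\int_\Omega\mu_0\rho\,dx+\int_\Omega\mu_1(\rho-1)\,dx .
\]
Its stationarity condition reads $S(\rho^\star)(x)+\nu-\mu_0(x)+\mu_1(x)=0$ for almost every $x$. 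Complementary slackness gives $\mu_0\rho^\star=0$ and $\mu_1(1-\rho^\star)=0$ almost everywhere, together with $\nu(\int\rho^\star-\Vol_{req})=0$. We assume the standard setting: $\Obj$ and $\Res$ are Fréchet differentiable, $\partial\Res/\partial u$ is boundedly invertible so that the adjoint \eqref{eq:adjoint} is well posed, and a constraint qualification holds (for instance Slater, since $\Vol_{req}>0$). These assumptions give existence of multipliers with $\mu_0,\mu_1\in L^\infty(\Omega)$.

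The core step is a pointwise case analysis that inverts the KKT conditions into a threshold rule. Fix $x$ outside a null set.
\begin{itemize}
\item If $S(\rho^\star)(x)+\nu<0$, stationarity forces $\mu_1(x)>0$, so $\rho^\star(x)=1$.
\item If $S(\rho^\star)(x)+\nu>0$, then $\mu_0(x)>0$, so $\rho^\star(x)=0$.
\end{itemize}
Hence, on the complement of the level set $Z:=\{x: S(\rho^\star)(x)=-\nu\}$,
\[
\rho^\star(x)=\mathbf{1}\{S(\rho^\star)(x)<-\nu\},
\]
which is a function of the pair $(S(\rho^\star),\nu)$ alone. Two distinct minimizers that share the same terminal sensitivity and multiplier must therefore agree off $Z$.

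The remaining work, and the main obstacle, is showing that $Z$ is negligible. This is what the phrase ``up to measure-zero singular sets'' refers to. In general $Z$ can have positive measure; such regions are the intermediate-density ``grey'' areas of unpenalized problems, where $\rho^\star\in(0,1)$ is allowed. We would handle this in two ways.
\begin{itemize}
\item \textbf{Excluding $Z$.} State the result as determination of $\rho^\star$ on $\Omega\setminus Z$ and define the singular set to be $Z$. This is honest, and it matches the statement's hedge.
\item \textbf{Showing $|Z|=0$ generically.} For SIMP with penalization $p>1$ and the converged binary designs considered in the paper ($\rho^\star\in\{0,1\}$), argue that a non-degenerate level set of $S(\rho^\star)$ has measure zero. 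One route uses analyticity or unique continuation of the state and adjoint fields: a sensitivity of the form $-p\rho^{p-1}\varepsilon(u):\mathbb{C}:\varepsilon(u)$ for compliance, or the analogous Brinkman term $\propto\|\bm{v}\|^2$ for CFD, cannot be constant on an open set unless the fields are trivial there. If that fails, we fall back to assuming strict complementarity as part of the ``standard KKT conditions.''
\end{itemize}

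Finally, $\nu$ is not really extra information beyond the threshold structure. When the volume constraint is active, $\nu$ is fixed by the requirement $|\{S(\rho^\star)<-\nu\}|=\Vol_{req}$, because the distribution function $t\mapsto|\{S(\rho^\star)<t\}|$ is monotone, and it is continuous at $t=-\nu$ exactly when $|Z|=0$. When the constraint is inactive, $\nu=0$. This links the result to the pseudo-sensitivity picture: any strictly monotone transform $h$ of $S$, as in Definition~\ref{def:pseudo_sensitivity}, preserves the threshold rule with the transformed threshold $h(-\nu)$.
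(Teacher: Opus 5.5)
Your proposal is correct and follows essentially the same route as the paper: stationarity of the reduced Lagrangian plus complementary slackness gives the threshold rule $\rho^\star=\mathbf{1}\{S(\rho^\star)<-\nu\}$ off the singular set $\{S(\rho^\star)+\nu=0\}$, with $\nu$ fixed by the volume constraint, and the paper likewise excludes that set (dismissing it heuristically via penalization) rather than proving it is null. One caution: the unique-continuation route to $|Z|=0$ will not work, because $\varepsilon(u):\mathbb{C}:\varepsilon(u)$ (or $\|\bm{v}\|^2$) can be constant on open sets for nontrivial fields, e.g.\ under uniform strain, and in the unpenalized problem the grey region is exactly a positive-measure set where $S=-\nu$; so rely on your first option or an explicit strict-complementarity assumption, which is what the paper's hedge amounts to.
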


\paragraph{Implication.}
Conditioning on $S_0$ allows a model to implicitly learn the thresholding operation induced by $\nu$, making the sensitivity field a sufficient statistic for predicting $\rho^\star$ in practice. A detailed proof follows in Section~\ref{proof:kkt}.

\subsection{Proof of Sensitivity Sufficiency and Volume Constraints}
\label{proof:kkt}

We now show that, at the optimum, the topology $\rho^\star$ is determined (up to measure-zero singular sets) by the pair consisting of the terminal sensitivity field $S(\rho^\star)$ and the volume multiplier $\nu$.

\begin{proof}
We consider the reduced optimization problem in $\rho$, where the state $u$ has been eliminated via the PDE constraint, so that
\[
\Obj(\rho) := \Obj(u(\rho), \rho), \qquad \Res(\rho) := \Res(u(\rho), \rho),
\]
and $S(\rho)$ denotes the reduced gradient $d\Obj/d\rho$ as in Eq.~\eqref{eq:sensitivity}. The reduced Lagrangian reads
\[
L(\rho, \nu, \bm{\mu}, \bm{\eta}) = \Obj(\rho) + \nu \left( \int_\Omega \rho \, dx - \Vol_{req} \right) - \bm{\mu}^\top \rho + \bm{\eta}^\top (\rho - 1),
\]
where $\bm{\mu}, \bm{\eta} \ge 0$ collect the Lagrange multipliers for the box constraints $0 \le \rho \le 1$.

The KKT stationarity condition with respect to $\rho$ is, componentwise,
\[
\frac{\partial L}{\partial \rho_i} = S_i(\rho) + \nu - \mu_i + \eta_i = 0.
\]
For the optimal solution $\rho^\star$, the complementarity slackness for the box constraints implies:
\begin{equation}
    \rho^\star_i = 
    \begin{cases}
        1 & \text{if } S_i(\rho^\star) + \nu < 0, \\
        0 & \text{if } S_i(\rho^\star) + \nu > 0, \\
        \xi \in [0,1] & \text{if } S_i(\rho^\star) + \nu = 0 \quad (\text{singular arc}).
    \end{cases}
\end{equation}
Thus, except on measure-zero singular sets where $S_i(\rho^\star) + \nu = 0$, the optimal design $\rho^\star$ is uniquely determined by the pair $(S(\rho^\star), \nu)$. In typical penalized TO settings, these singular cases vanish in the limit, so $(S(\rho^\star),\nu)$ can be regarded as determining $\rho^\star$ in practice.

In our learning framework, $\nu$ is a global scalar derived from $S$ to satisfy the volume constraint $\Vol_{req}$. Conditioning on the field $S$ therefore allows the network to implicitly learn the thresholding operation $\nu(S)$, and $S$ acts as a sufficient statistic field for predicting $\rho^\star$ up to this scalar threshold.
\end{proof}

\subsection{Derivation of Pseudo-Sensitivity for Compliance}
\label{proof:structrual_pseudo}
\begin{proof}
Let $\Obj = \bm{f}^\top \bm{u}$ (Compliance) and $\bm{K}(\rho)\bm{u} = \bm{f}$ (Equilibrium). Under a standard SIMP interpolation, the element stiffness is $\bm{K}_e(\rho_e) = \rho_e^p \bm{K}_0$ for some penalization exponent $p > 1$. The adjoint equation is
\[
\bm{K}(\rho)\bm{\lambda} = -\bm{f},
\]
so that $\bm{\lambda} = -\bm{u}$ (self-adjoint problem). The element-wise sensitivity is
\[
S_e = \bm{\lambda}_e^\top \frac{\partial \bm{K}_e}{\partial \rho_e} \bm{u}_e = -\bm{u}_e^\top \big(p \rho_e^{p-1} \bm{K}_0\big) \bm{u}_e.
\]
At iteration 0, we take $\rho_e = 1$ everywhere, so
\[
S_e = - p \,\bm{u}_e^\top \bm{K}_0 \bm{u}_e \propto - \text{Strain Energy Density}.
\]
This shows that, at the initial design, the Strain Energy Density is a $0$-approximate Pseudo-Sensitivity, confirming item (1) of Proposition~\ref{prop:field_hierarchy}.
\end{proof}

\subsection{Derivation of Pseudo-Sensitivity for Energy Dissipation in Navier--Stokes Flow}
\label{proof:cfd_pseudo}

We consider incompressible Navier--Stokes (NS) flow with an energy dissipation objective and derive the exact sensitivity field. Under the frozen turbulence assumption and high-Reynolds number asymptotics, we show that the sensitivity reduces to a pseudo-sensitivity proportional to $-\|\vv\|^2$.

\subsubsection{Problem Setup: Navier--Stokes with Brinkman Penalization}

We consider incompressible NS flow in a design domain $\Omega \subset \R^d$ ($d=2,3$) with the material distribution described by a porosity-dependent Brinkman term $\alpha(\rho)$, where $\rho \in L^\infty(\Omega;[0,1])$ is the design variable. Under the \emph{frozen turbulence} assumption, we model the effective kinematic viscosity as constant,
\[
\nu_{\text{eff}} = \nu + \nu_t,
\]
where $\nu$ is the molecular viscosity and $\nu_t$ is the turbulent eddy viscosity, treated as independent of $\rho$ during differentiation ($\delta \nu_t = 0$).

The steady primal NS equations (in velocity--pressure form) are
\begin{equation}
\label{eq:ns_primal}
\begin{aligned}
\Res_{\vv}(\vv, p, \rho) &:= (\vv \cdot \nabla)\vv - \nu_{\text{eff}} \Delta \vv + \frac{1}{\rho_f}\nabla p + \alpha(\rho)\,\vv = \bm{0} \quad \text{in } \Omega, \\
\Res_{c}(\vv) &:= \nabla \cdot \vv = 0 \quad \text{in } \Omega,
\end{aligned}
\end{equation}
with appropriate boundary conditions (e.g., prescribed inlet velocity, zero-pressure outlet, no-slip walls). Here $\vv$ is the velocity field, $p$ the pressure, $\rho_f$ the fluid density (constant), and $\alpha(\rho)$ enforces solid regions via large drag as $\rho \to 0$.

\subsubsection{Energy Dissipation Functional}

We take as objective the total viscous and Brinkman dissipation
\begin{equation}
\label{eq:dissipation_functional}
\Obj(\vv, \rho) = \int_\Omega \left[ \nu_{\text{eff}} \, \|\nabla \vv\|^2 + \alpha(\rho) \, \|\vv\|^2 \right] \, dV,
\end{equation}
where $\|\nabla \vv\|^2 = \nabla \vv : \nabla \vv = \sum_{i,j} (\partial_j v_i)^2$.

The corresponding topology optimization problem is
\begin{equation}
\min_{\rho \in [0,1]^N} \Obj(\vv(\rho), \rho) \quad \text{s.t.} \quad \Res_{\vv}(\vv, p, \rho) = \bm{0}, \ \Res_c(\vv) = 0, \ \int_\Omega \rho \, dV \le \Vol_{\text{req}}.
\end{equation}

\subsubsection{Lagrangian and Adjoint Equations}

We form the Lagrangian
\begin{equation}
\mathcal{L}(\vv, p, \rho, \ww, q) = \Obj(\vv, \rho) + \int_\Omega \ww \cdot \Res_{\vv}(\vv, p, \rho) \, dV + \int_\Omega q \, \Res_{c}(\vv) \, dV,
\end{equation}
where $\ww$ and $q$ are the adjoint velocity and adjoint pressure.

Taking variations with respect to the state variables $(\vv, p)$ and imposing stationarity yields the adjoint system. Under the frozen turbulence assumption ($\delta \nu_{\text{eff}} = 0$), we obtain (after standard integration-by-parts manipulations):
\begin{equation}
\label{eq:ns_adjoint}
\begin{aligned}
(\nabla \vv)^\top \ww - (\vv \cdot \nabla)\ww - \nu_{\text{eff}} \Delta \ww + \alpha(\rho)\,\ww + \nabla q &= -2\nu_{\text{eff}} \Delta \vv + 2\alpha(\rho)\,\vv \quad \text{in } \Omega, \\
\nabla \cdot \ww &= 0 \quad \text{in } \Omega,
\end{aligned}
\end{equation}
with suitable adjoint boundary conditions (e.g., homogeneous Dirichlet at inlets and walls, characteristic conditions at outlets). The right-hand side arises from differentiating the dissipation functional \eqref{eq:dissipation_functional} with respect to $\vv$.

\subsubsection{Sensitivity with Respect to the Design}

The reduced gradient (sensitivity) with respect to $\rho$ is given by differentiating the Lagrangian while enforcing the adjoint equations:
\begin{equation}
S(x) := \frac{d\Obj}{d\rho}(x) = \frac{\partial \Obj}{\partial \rho}(x) - \ww(x) \cdot \frac{\partial \Res_{\vv}}{\partial \rho}(x).
\end{equation}

From \eqref{eq:ns_primal} and \eqref{eq:dissipation_functional}, we have
\begin{align}
\frac{\partial \Obj}{\partial \rho} &= \|\vv\|^2 \, \frac{d\alpha}{d\rho}, \\
\frac{\partial \Res_{\vv}}{\partial \rho} &= \frac{d\alpha}{d\rho} \, \vv.
\end{align}

Substituting into the sensitivity expression yields
\begin{equation}
S(x) = \underbrace{\|\vv(x)\|^2 \, \frac{d\alpha}{d\rho}}_{\text{explicit dissipation term}} - \ww(x) \cdot \left( \frac{d\alpha}{d\rho} \, \vv(x) \right) = \frac{d\alpha}{d\rho} \left( \|\vv\|^2 - \ww \cdot \vv \right).
\end{equation}

A common choice for the Brinkman interpolation is, for example, a RAMP-type law
\begin{equation}
\alpha(\rho) = \alpha_{\min} + (\alpha_{\max} - \alpha_{\min}) \frac{1-\rho}{1+q\rho}, \quad q > 0,
\end{equation}
which yields
\begin{equation}
\frac{d\alpha}{d\rho} = - (\alpha_{\max} - \alpha_{\min}) \, \frac{1+q}{(1+q\rho)^2} < 0.
\end{equation}
Thus we can write
\begin{equation}
S(x) = -\left| \frac{d\alpha}{d\rho} \right| \left( \|\vv(x)\|^2 - \ww(x) \cdot \vv(x) \right).
\end{equation}

\subsubsection{Low-Reynolds Number (Stokes) Limit}

At low Reynolds numbers ($\mathrm{Re} \ll 1$), inertial terms vanish from both the primal \eqref{eq:ns_primal} and adjoint \eqref{eq:ns_adjoint} systems. Both reduce to linear Brinkman--Stokes operators of the form $-\nu_{\text{eff}} \Delta \mathbf{u} + \alpha(\rho)\,\mathbf{u} + \nabla \pi = \mathbf{f}$, with $\nabla \cdot \mathbf{u} = 0$. The adjoint forcing is $\mathbf{f} = -2\nu_{\text{eff}} \Delta \vv + 2\alpha(\rho)\,\vv$, for which a particular solution is $\ww = -\vv$ (verified by direct substitution). Therefore, in the Stokes regime:
\[
\ww = -\vv \quad \Longrightarrow \quad S(x) = -2\left|\frac{d\alpha}{d\rho}\right| \|\vv(x)\|^2 \propto -\|\vv(x)\|^2.
\]

\subsubsection{High-Reynolds Number Asymptotics}

At high Reynolds numbers, the global flow is convection-dominated, but in optimized designs the characteristic channel width $h$ becomes small and the \emph{local} Reynolds number $\mathrm{Re}_h = U h / \nu_{\text{eff}}$ can be much less than one inside thin flow channels. Under the frozen turbulence assumption ($\delta \nu_t = 0$), the adjoint equation in these micro-channels reduces to the same Stokes-like balance as above. A scaling argument shows that the adjoint velocity again satisfies $\ww \approx -\vv$, yielding:
\[
S(x) \approx -2\left|\frac{d\alpha}{d\rho}\right| \|\vv(x)\|^2 \propto -\|\vv(x)\|^2.
\]

\subsubsection{Robustness Across Reynolds Regimes}

The proportionality $S \propto -\|\vv\|^2$ thus holds in both asymptotic limits (low and high $\mathrm{Re}$), with deviations confined to the transitional regime where neither limit fully applies. In the notation of Definition~\ref{def:pseudo_sensitivity}, this corresponds to $X(x) = \|\vv(x)\|^2$ and a linear monotone function $h(s) = -Cs$ for some $C > 0$, giving $S_0(x) = h(X(x))$ with $\epsilon = 0$ in the asymptotic regimes.

\subsubsection{Validity of the Frozen Turbulence Assumption}

The frozen turbulence assumption ($\delta \nu_t = 0$) becomes increasingly valid at high Reynolds numbers because:
\begin{itemize}
    \item Turbulent fluctuations evolve on much faster time scales than design variations, so $\nu_t$ can be regarded as quasi-steady with respect to $\rho$.
    \item Optimized topologies tend to adjust the \emph{mean} flow structure more strongly than the small-scale turbulence, so the leading-order sensitivity of dissipation is captured by variations in $\vv$ and $\alpha(\rho)$.
    \item In narrow channels, the effective viscosity is dominated by $\nu_t$, and the flow tends towards locally Stokes-like behavior despite a large global Reynolds number, reinforcing the approximation $\ww \approx -\vv$.
\end{itemize}

Under these conditions, the approximation $S \propto -\|\vv\|^2$ holds to leading order, justifying the classification of the velocity magnitude squared as a $0$-approximate Pseudo-Sensitivity for the energy dissipation objective in NS-based topology optimization, as stated in item (2) of Proposition~\ref{prop:field_hierarchy}. Quantitative validation across our full CFD dataset (Appendix~\ref{app:pseudo_validation}) confirms that the Pearson correlation between pseudo-sensitivity and true sensitivity exceeds $r = 0.95$ on average, with degradation confined to a narrow transitional Reynolds-number band.

\begin{mohammad}
\section{Training and Evaluation Details}
\label{app:training}

\subsection{Hardware and Software Configuration}

All experiments were conducted on a single NVIDIA L40S GPU with 46\,GB of memory. The training server is equipped with an AMD EPYC 7R13 processor (16 cores, 32 threads) and 248\,GB of system RAM. We used PyTorch 2.9.1 with CUDA 12.8 and mixed-precision training (bf16) to accelerate computation.

\subsection{CFD Topology Optimization}

For the fluid dynamics problem, we trained four model variants on the sensitivity field prediction task with a resolution of $128 \times 128$. We initialized training with 50 epochs for all models; for models that did not converge, we progressively increased the training budget by 50 epochs until satisfactory performance was achieved. Table~\ref{tab:cfd_training} summarizes the final training hyperparameters.

\begin{table}[h]
\centering
\caption{Training hyperparameters for CFD topology optimization experiments.}
\label{tab:cfd_training}
\begin{tabular}{lcccccc}
\toprule
Model & Batch Size & Grad Accum & Eff. Batch & Learning Rate & Max Steps & Max Epochs \\
\midrule
Ours  & 256 & 1 & 256 & $5 \times 10^{-4}$ & 5,000 & 50 \\
PDE-T & 256 & 1 & 256 & $5 \times 10^{-4}$ & 10,000 & 100 \\
UDiT  & 128 & 2 & 256 & $5 \times 10^{-4}$ & 10,000 & 100 \\
DiT   & 128 & 2 & 256 & $3 \times 10^{-4}$ & 15,000 & 150 \\
\bottomrule
\end{tabular}
\end{table}

Due to memory constraints, UDiT and DiT were trained with a batch size of 128 with gradient accumulation over 2 steps to simulate an effective batch size of 256, matching the other methods. All models employ exponential moving average (EMA) with a decay rate of 0.999 for evaluation. We use linear sampling with 50 denoising steps for our method. During evaluation on the pressure drop relative error metric, we filter out samples where the predicted design deviates more than 50\% from the initial geometry to ensure fair comparison across all models.

\subsection{Structural Topology Optimization}

For the compliance minimization problem, Table~\ref{tab:structural_training} summarizes the training hyperparameters.

\begin{table}[h]
\centering
\caption{Training hyperparameters for structural topology optimization experiments.}
\label{tab:structural_training}
\begin{tabular}{lccccc}
\toprule
Model & Batch Size & Learning Rate & Max Steps & Max Epochs \\
\midrule
Ours  & 256 & $5 \times 10^{-4}$ & 27,600 & 150 \\
\bottomrule
\end{tabular}
\end{table}

We use a window size of 33 for neighborhood attention and 50 sampling steps with linear scheduling. Following the NITO evaluation protocol, we filter out samples with compliance errors exceeding 1000\% when computing the relative error metric.

\subsection{Common Settings}

All models are trained with the AdamW optimizer. We employ adaptive gradient clipping with EMA-based norm estimation (coefficients 0.9 and 0.99) to stabilize training. Data normalization follows a mean-std scheme for both inputs and physical constants.
\subsection{Metric Definitions}
\label{app:metric_defs}

\paragraph{CFD (pressure-drop relative error).}
We evaluate CFD solutions by relative error in pressure drop,
\[
\mathrm{Err}_{\Delta p}
=
\frac{\Delta p_{\mathrm{pred}}-\Delta p_{\mathrm{ref}}}{\Delta p_{\mathrm{ref}}}.
\]
The pressure drop $\Delta p$ is computed as the mass‑flow‑weighted total pressure difference between inlet and outlet boundaries:
\[
\Delta p
=
\left[
\frac{\sum_{f\in \Gamma_{\mathrm{in}}} |\dot{m}_f|\, p_{t,f}}
     {\sum_{f\in \Gamma_{\mathrm{in}}} |\dot{m}_f|}
\right]
-
\left[
\frac{\sum_{f\in \Gamma_{\mathrm{out}}} |\dot{m}_f|\, p_{t,f}}
     {\sum_{f\in \Gamma_{\mathrm{out}}} |\dot{m}_f|}
\right],
\]
where $\Gamma_{\mathrm{in}}$ and $\Gamma_{\mathrm{out}}$ are inlet and outlet boundary faces, $\dot{m}_f$ is the signed mass flow rate through face $f$, and $p_{t,f}$ is the absolute total pressure at face $f$.

\paragraph{Structural (compliance relative error).}
We evaluate structural solutions by relative compliance error,
\[
\mathrm{Err}_{C}
=
\frac{C_{\mathrm{pred}}-C_{\mathrm{ref}}}{C_{\mathrm{ref}}},
\qquad
C=\bm{f}^{\top}\bm{u},
\]
where $\bm{f}$ is the applied nodal load vector and $\bm{u}$ is the displacement vector obtained from the linear elasticity solve. $C_{\mathrm{ref}}$ corresponds to the compliance of the reference (ground‑truth) topology and $C_{\mathrm{pred}}$ to the compliance of the predicted topology, both evaluated under identical loads and boundary conditions.

\section{Additional Results}
\label{app_results}

\begin{table*}[b]
\centering
\footnotesize
\setlength{\tabcolsep}{5pt}
\caption{Cross entropy comparison across conditioning modalities and test sets (median BCE). Lower is better. Best results per column are \textbf{bolded}.}
\label{tab:combined_conditioning}
\begin{tabular}{l ccc c ccc c ccc}
\toprule
 & \multicolumn{3}{c}{\textbf{Sensitivity}} & & \multicolumn{3}{c}{\textbf{Pseudo-Sen. (Velocity)}} & & \multicolumn{3}{c}{\textbf{Pressure}} \\
\cmidrule(lr){2-4} \cmidrule(lr){6-8} \cmidrule(lr){10-12}
\textbf{Model} & \textbf{ID} & \textbf{OOD-M} & \textbf{OOD-H} & & \textbf{ID} & \textbf{OOD-M} & \textbf{OOD-H} & & \textbf{ID} & \textbf{OOD-M} & \textbf{OOD-H} \\
\midrule
DiT   & $0.17$ & $0.36$ & $0.64$ & & $0.19$ & $0.49$ & $0.91$ & & $5.21$ & $5.25$ & $5.24$ \\
UDiT  & $0.08$ & $0.15$ & $0.23$ & & $0.08$ & $0.20$ & $\mathbf{0.36}$ & & $0.24$ & $1.01$ & $1.67$ \\
PDE-T & $0.08$ & $0.22$ & $0.41$ & & $0.07$ & $0.29$ & $0.58$ & & $0.36$ & $1.91$ & $3.03$ \\
\textbf{Ours} & $\mathbf{0.04}$ & $\mathbf{0.12}$ & $\mathbf{0.22}$ & & $\mathbf{0.04}$ & $\mathbf{0.15}$ & $0.38$ & & $\mathbf{0.07}$ & $\mathbf{0.23}$ & $\mathbf{0.46}$ \\
\bottomrule
\end{tabular}
\end{table*}
\subsection{Conditioning Signal Ablation}
\label{sec:conditioning_ablation}

Table~\ref{tab:combined_conditioning} validates our theoretical predictions by comparing cross entropy across three conditioning modalities: sensitivity, velocity (pseudo-sensitivity), and pressure. The DPI-derived information ordering is evident consistently across all architectures; Every model achieves lowest CE under sensitivity conditioning and highest under pressure, with velocity intermediate. This confirms that the conditioning hierarchy is fundamental to the problem structure, not an artifact of any particular architecture.

\subsection{Topology Control via Sensitivity Manipulation}
\label{sec:sensitivity_manipulation}

A practical advantage of sensitivity-based conditioning is the ability to manipulate the input field to enforce geometric constraints on the output topology. By masking regions of the sensitivity field (setting them to values that discourage material placement), we can block specific areas from the generated design. Figure~\ref{fig:manip} illustrates this capability across four test cases: a circular exclusion zone is applied to the sensitivity field, and the model generates topologies that route flow around the blocked region while maintaining connectivity. This is particularly useful when other components occupy portions of the design domain, or when manufacturing constraints require keep-out regions.

\begin{figure}[tb]
    \centering
    \includegraphics[width=0.85\linewidth]{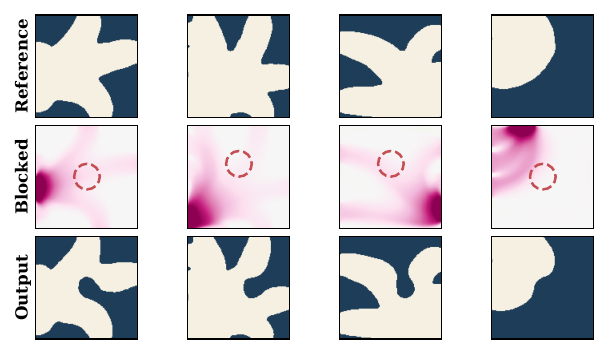}
    \caption{Topology control via sensitivity masking. Top row: reference topologies. Middle row: sensitivity fields with circular exclusion zones (dashed red). Bottom row: generated topologies that successfully avoid the blocked region while preserving flow connectivity.}
    \label{fig:manip}
\end{figure}

\subsection{Volume Fraction Constraints}
\label{sec:volume_constraint}

The Bernoulli flow formulation naturally supports volume fraction constraints. We propose a confidence-based progressive pruning strategy that enforces the constraint during sampling rather than as a post-processing step. At each diffusion timestep, we remove low-confidence material before the next iteration, allowing the model to adapt its predictions to the pruned state. Algorithm~\ref{alg:progressive_vf} details this procedure.

\begin{algorithm}[tb]
\caption{Confidence-based progressive volume constraint.}
\label{alg:progressive_vf}
\begin{algorithmic}[1]
\REQUIRE Model $f_\theta$; volume limit $V_{\max}$; steps $T$
\STATE Sample $\mathbf{x}_{t_T} \sim \text{Bernoulli}(0.5)$
\FOR{$\tau = T, T{-}1, \ldots, 1$}
    \STATE $\ell_\tau = f_\theta(\mathbf{x}_{t_\tau}, t_\tau)$ \hfill $\triangleright$ Raw logits
    \STATE $p_\tau = \sigma(\ell_\tau / \kappa)$ \hfill $\triangleright$ Probabilities
    \STATE $\mathbf{x}_{t_{\tau-1}} \sim \text{Bernoulli}(p_\tau)$
    \IF{$\text{mean}(\mathbf{x}_{t_{\tau-1}}) > V_{\max}$}
        \STATE $\theta_\tau = Q_{1-V_{\max}}(\ell_\tau \odot \mathbf{x}_{t_{\tau-1}})$ \hfill $\triangleright$ Confidence threshold
        \STATE $\mathbf{x}_{t_{\tau-1}} = \mathbf{x}_{t_{\tau-1}} \odot \mathbf{1}[\ell_\tau > \theta_\tau]$ \hfill $\triangleright$ Prune low-confidence
    \ENDIF
\ENDFOR
\STATE \textbf{return} $\mathbf{x}_{t_0}$
\end{algorithmic}
\end{algorithm}

The key insight is that pruning occurs before the next forward pass, so the model observes the constrained state and can redistribute material to maintain connectivity. Figure~\ref{fig:vf} demonstrates this mechanism across four test cases: starting from unconstrained topologies, we progressively tighten the volume budget to $90\%$, $80\%$, and $70\%$ of the original. The model preserves primary flow channels while trimming secondary branches, achieving the target volume fraction without disconnecting the flow path.

\begin{figure}[tb]
    \centering
    \includegraphics[width=0.85\linewidth]{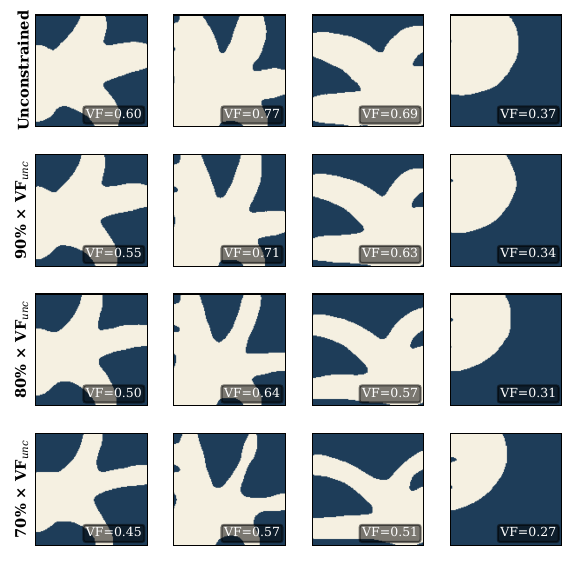}
    \caption{Confidence-based progressive volume constraint. Each column shows a different test case; rows correspond to decreasing volume budgets (unconstrained, $90\%$, $80\%$, $70\%$ of $\mathrm{VF}_{\mathrm{unc}}$). The resulting volume fractions are annotated on each panel. The model adapts to maintain flow connectivity under tighter constraints.}
    \label{fig:vf}
\end{figure}

\subsection{Greedy Terminal Step for Clean Topologies}
\label{sec:greedy_sampling}

The standard BFM sampling procedure~\cite{gat2024bernoulli} draws independent Bernoulli samples at each pixel in the final step, which can introduce salt-and-pepper noise in the generated topology (Figure~\ref{fig:sample}, top row). While acceptable for image generation, such artifacts are problematic for topology optimization: isolated pixels cause mesh generation failures and numerical instabilities in downstream simulations.

We address this by replacing the stochastic final step with a deterministic greedy projection. As shown in Algorithm~\ref{alg:greedy_sampling}, after $T-1$ stochastic transitions, the final state is obtained by thresholding the predicted probability field at $0.5$ rather than sampling. This produces binary topologies free of isolated pixels while preserving the probabilistic exploration during earlier diffusion steps.

\begin{algorithm}[tb]
\caption{Greedy terminal step sampling.}
\label{alg:greedy_sampling}
\begin{algorithmic}[1]
\REQUIRE Trained model $f_\theta$; diffusion steps $T$; temperature $\kappa$
\STATE Sample $\mathbf{x}_{t_T} \sim \text{Bernoulli}(0.5)$
\FOR{$\tau = T, T{-}1, \ldots, 2$}
    \STATE Predict $p_\theta(\mathbf{x}_{t_{\tau-1}}) = \sigma(f_\theta(\mathbf{x}_{t_\tau}, t_\tau) / \kappa)$
    \STATE Sample $\mathbf{x}_{t_{\tau-1}} \sim \text{Bernoulli}(p_\theta(\mathbf{x}_{t_{\tau-1}}))$
\ENDFOR
\STATE Predict $p_\theta(\mathbf{x}_{t_0}) = \sigma(f_\theta(\mathbf{x}_{t_1}, t_1) / \kappa)$
\STATE $\hat{\mathbf{x}}_{t_0} = \mathbf{1}[p_\theta(\mathbf{x}_{t_0}) > 0.5]$ \hfill $\triangleright$ Greedy thresholding
\STATE \textbf{return} $\hat{\mathbf{x}}_{t_0}$
\end{algorithmic}
\end{algorithm}

\begin{figure}[tb]
    \centering
    \includegraphics[width=0.85\linewidth]{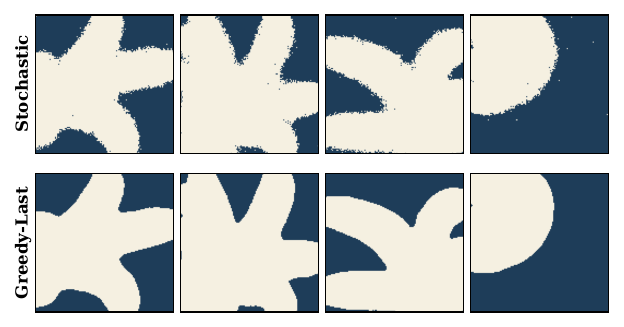}
    \caption{Comparison of sampling strategies across four test cases. Top row: fully stochastic BFM sampling produces salt-and-pepper artifacts along channel boundaries. Bottom row: greedy terminal step yields clean, simulation-ready topologies with identical macro-structure.}
    \label{fig:sample}
\end{figure}

\subsection{Sufficiency of Initial Sensitivity $S_0$}
\label{app:auc_sufficiency}

A key modeling assumption is that the initial sensitivity $S_0$ captures sufficient information for predicting the final topology. We validate this using the area under the ROC curve (AUC): the probability that a solid element in $\rho^*$ ranks above a void element under a given sensitivity field. Table~\ref{tab:auc} reports AUC for initial ($S_0$) and converged ($S^*$) sensitivities across four PDE families, including two newly generated (heat conduction and eigenfrequency).

\begin{table}[tb]
\centering
\small
\caption{AUC of initial vs.\ converged sensitivities across PDE families. $S_0$ achieves AUC comparable to $S^*$ in all cases, confirming practical sufficiency. AUC(vf) shows that volume-fraction thresholding alone is far weaker.}
\label{tab:auc}
\begin{tabular}{lcccc}
\toprule
\textbf{Problem} & \textbf{AUC($S_0$)} & \textbf{AUC($S^*$)} & $\boldsymbol{\Delta}$ & \textbf{AUC(vf)} \\
\midrule
Heat conduction & $0.972$ & $0.986$ & $+0.014$ & $0.814$ \\
NS 3-outlet & $0.993$ & $1.000$ & $+0.007$ & $0.799$ \\
Eigenfrequency & $0.957$ & $0.868$ & $-0.089$ & $0.730$ \\
Compliance & $0.915$ & $0.918$ & $+0.003$ & $0.731$ \\
\bottomrule
\end{tabular}
\end{table}

AUC($S_0$) $> 0.91$ in all cases. If $S_0$ were substantially insufficient, AUC should consistently increase from initialization to convergence; this is not observed. Notably, for eigenfrequency, AUC \emph{decreases} from $S_0$ to $S^*$, meaning the initial sensitivity is more discriminative of $\rho^*$ than the converged field. The AUC(vf) column confirms that rank sufficiency does not imply trivial extractability: volume-fraction thresholding alone yields substantially lower AUC, and a learned surrogate is needed to resolve spatial topology.

\subsection{Quantitative Validation of Pseudo-Sensitivity}
\label{app:pseudo_validation}

We validate the pseudo-sensitivity approximation $S \propto -\|\vv\|^2$ quantitatively across the full CFD dataset. For all 1-, 2-, and 3-outlet configurations, the Pearson correlation between $\|\vv\|^2$ and the true sensitivity yields a mean $r > 0.95$, with fewer than 2.2\% of samples falling below $r = 0.80$.

Figure~\ref{fig:re_correlation} shows the Pearson correlation binned by Reynolds number. At low $\mathrm{Re}$, correlation is highest ($r > 0.98$), consistent with the Stokes symmetry argument. A mild degradation occurs in the transitional $\mathrm{Re}$ regime, as predicted by the theory. At high $\mathrm{Re}$, correlation recovers and variance collapses as the adjoint scaling $\ww \to -\vv$ takes hold. Increasing inlet count further improves correlation even at moderate $\mathrm{Re}$, as distributed inflow reduces recirculation and adjoint transport effects.

\begin{figure}[tb]
    \centering
    \includegraphics[width=0.85\linewidth]{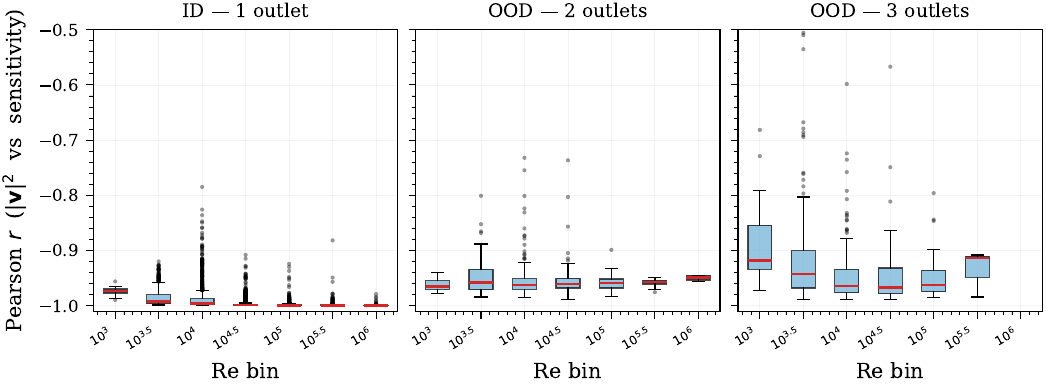}
    \caption{Pearson correlation between pseudo-sensitivity ($\|\vv\|^2$) and true sensitivity, binned by Reynolds number. The approximation holds strongly at both low and high $\mathrm{Re}$, with mild degradation confined to the transitional regime.}
    \label{fig:re_correlation}
\end{figure}

\section{Datasets Details}
\label{app_datasets}

\subsection{CFD Topology Optimization Dataset}
\label{app:cfd_dataset}

\subsubsection{Domain and Boundary Conditions}
We consider a 2D square domain $\Omega = [0, L]^2$ with side length $L = 0.1\,\mathrm{m}$.
Inlets and outlets are placed along the domain boundary with fixed aperture size $w = 0.02\,\mathrm{m}$.
Inlets are assigned a uniform velocity profile with magnitude $u_{\mathrm{in}} = 1\,\mathrm{m/s}$ normal to the boundary.
Outlets are modeled as pressure outlets with zero gauge pressure.
All remaining boundary segments are treated as no-slip walls.

\subsubsection{Governing Equations and Solver}
All simulations use steady-state Reynolds-Averaged Navier--Stokes (RANS) equations with the realizable $k$--$\varepsilon$ turbulence model, solved in STAR-CCM+ \cite{starccm}.
The flow is treated as incompressible with constant density $\rho$ and dynamic viscosity $\mu$ per instance.
Convergence is declared when residuals for continuity, momentum, $k$, and $\varepsilon$ drop below $10^{-4}$.

\subsubsection{Topology Optimization Formulation}
The optimization objective minimizes pressure drop plus a volume penalty:
\begin{equation}
\mathcal{J} = \Delta p + \lambda \int_\Omega \chi \, dV,
\label{eq:cfd_objective}
\end{equation}
where $\Delta p$ is the mass-flow-averaged total pressure difference between inlet and outlet boundaries, $\chi \in \{0,1\}$ is the material indicator (fluid vs.\ solid), and $\lambda$ is a scalar weight controlling the volume fraction.

The pressure drop is computed as
\begin{equation}
\Delta p = \left[ \frac{\sum_f |\dot{m}_f| \, p_{t,f}}{\sum_f |\dot{m}_f|} \right]_{\mathrm{inlet}} - \left[ \frac{\sum_f |\dot{m}_f| \, p_{t,f}}{\sum_f |\dot{m}_f|} \right]_{\mathrm{outlet}},
\end{equation}
where $\dot{m}_f$ is the mass flow rate through face $f$ and $p_{t,f}$ is the absolute total pressure at that face.

Solid regions are enforced via Brinkman penalization, which adds a drag term to the momentum equations:
\begin{equation}
\frac{\partial (\rho \vv)}{\partial t} + \nabla \cdot (\rho \vv \otimes \vv) = -\nabla \cdot \bm{\sigma} - \alpha (1 - \chi) \vv,
\end{equation}
where $\alpha$ is the penalization magnitude (set to $5\times10^4$ in our simulations) and $\chi = 1$ in fluid regions, $\chi = 0$ in solid regions.
This formulation models solid material as a porous medium with vanishing permeability.

Optimization is performed using Adam with learning rate $10$, $\beta_1 = 0.9$, $\beta_2 = 0.999$, for a maximum of 30 iterations per instance or reaching $0.01\%$ material change.

\subsubsection{Dataset Splits and Sample Counts}
We construct distribution shifts by varying the number of outlets:
\begin{center}
\begin{tabular}{lcccc}
\toprule
\textbf{Split} & $N_{\mathrm{out}}$ & $N_{\mathrm{in}}$ range & \textbf{Training} & \textbf{Test} \\
\midrule
ID             & 1 & 1--11 & 10,000 & 1,000 \\
OOD-medium     & 2 & 1--10 & ---    & 500   \\
OOD-hard       & 3 & 1--9  & ---    & 500   \\
\bottomrule
\end{tabular}
\end{center}
Inlet and outlet positions are sampled uniformly along the boundary edges, subject to a minimum spacing constraint of $0.015\,\mathrm{m}$ to avoid overlapping apertures.

\subsubsection{Fluid Property Sampling}
For each instance, we independently sample dynamic viscosity and density:
\begin{align}
\mu &\sim \mathrm{Unif}(10^{-5}, 10^{-3})\ \mathrm{Pa \cdot s}, \\
\rho &\sim \mathrm{Unif}(0.5, 10.0)\ \mathrm{kg/m^3}.
\end{align}
This range spans from low-viscosity gases to moderately viscous liquids, ensuring diversity in flow regimes.

\subsubsection{Reynolds Number Ranges}
\label{app:reynolds}
We characterize flow regimes via the Reynolds number
\begin{equation}
\mathrm{Re} = \frac{\rho \, U \, L}{\mu}.
\end{equation}
To define a consistent velocity scale across configurations with varying inlet/outlet counts, we use mass conservation.
Assuming incompressible flow and equal aperture sizes, the mean outlet velocity satisfies
\begin{equation}
U_{\mathrm{out}} \approx \frac{N_{\mathrm{in}}}{N_{\mathrm{out}}} \, u_{\mathrm{in}}.
\end{equation}
The resulting per-outlet Reynolds number ranges are:
\begin{center}
\begin{tabular}{lccc}
\toprule
\textbf{Split} & $N_{\mathrm{out}}$ & $N_{\mathrm{in}}$ range & $\mathrm{Re}_{\mathrm{out}}$ range \\
\midrule
ID         & 1 & 1--11 & $[50,\ 1.1 \times 10^{6}]$ \\
OOD-medium & 2 & 1--10 & $[25,\ 5 \times 10^{5}]$   \\
OOD-hard   & 3 & 1--9  & $[16.7,\ 3 \times 10^{5}]$ \\
\bottomrule
\end{tabular}
\end{center}
These ranges span laminar to fully turbulent regimes, with the turbulence model active in all simulations.

\subsubsection{Sensitivity Computation}
Adjoint sensitivities are computed using the discrete adjoint solver in STAR-CCM+.
We employ the frozen turbulence assumption, treating eddy viscosity as constant with respect to design perturbations.
This approximation is standard for RANS-based topology optimization and yields stable gradients across the dataset.
The initial sensitivity field $S_0$ is evaluated at a uniform density $\rho_{\mathrm{init}} = 1.0$ before optimization begins.

\subsubsection{Data Fields Stored}
For each sample, we store:
\begin{itemize}
    \item Binary optimal topology $\rho^\star \in \{0,1\}^{128 \times 128}$
    \item Initial sensitivity field $S_0 \in \mathbb{R}^{128 \times 128}$
    \item Velocity magnitude field $\|\vv\| \in \mathbb{R}^{128 \times 128}$
    \item Vector velocity field $v \in \mathbb{R}^{2\times 128 \times 128}$
    \item Pressure field $p \in \mathbb{R}^{128 \times 128}$
    \item Boundary condition mask indicating inlet/outlet/wall locations
    \item Scalar metadata: $\mu$, $\rho$, $N_{\mathrm{in}}$, $N_{\mathrm{out}}$, final objective value
\end{itemize}
All fields are interpolated to a uniform $128 \times 128$ Cartesian grid.

\subsection{Visualized Samples}
\label{app:dataset_visualization}

Figures~\ref{fig:app_singleoutlet}--\ref{fig:app_threeoutlet} show representative samples from each dataset split, illustrating the diversity of boundary configurations and corresponding optimal topologies. Each visualization displays the velocity magnitude field overlaid with the optimized material distribution, where white regions indicate solid material and colored regions show fluid flow. The sensitivity field $S_0$ used for conditioning is computed at uniform initial density before optimization.

\begin{figure}[tb]
    \centering
    \includegraphics[width=0.75\linewidth]{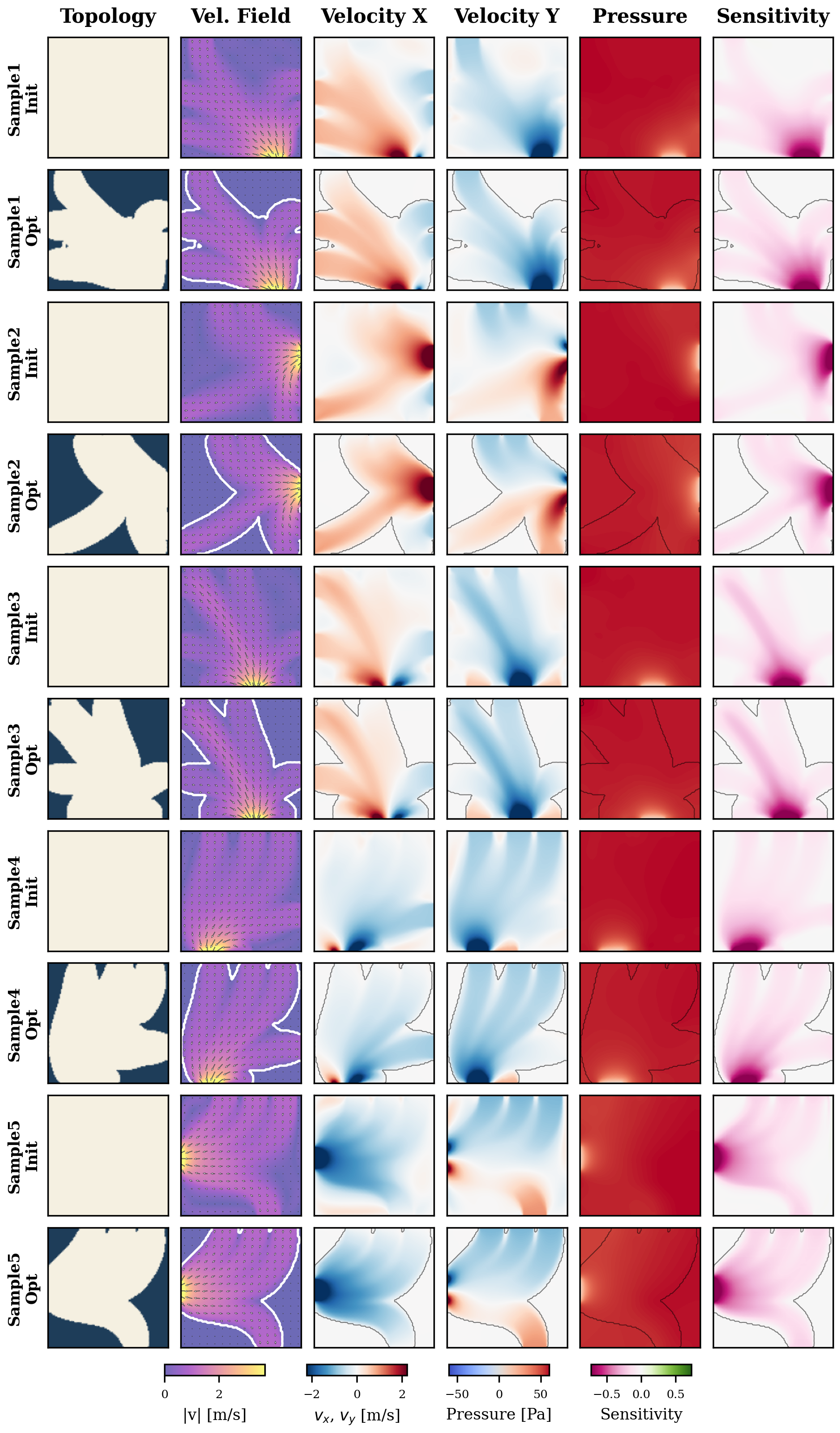}
    \caption{Representative samples from the in-distribution (ID) training set with a single outlet. Inlet positions vary along the domain boundary, producing diverse flow patterns and optimal topologies that route flow from inlet(s) to the single outlet while minimizing pressure drop.}
    \label{fig:app_singleoutlet}
\end{figure}

\begin{figure}[tb]
    \centering
    \includegraphics[width=0.75\linewidth]{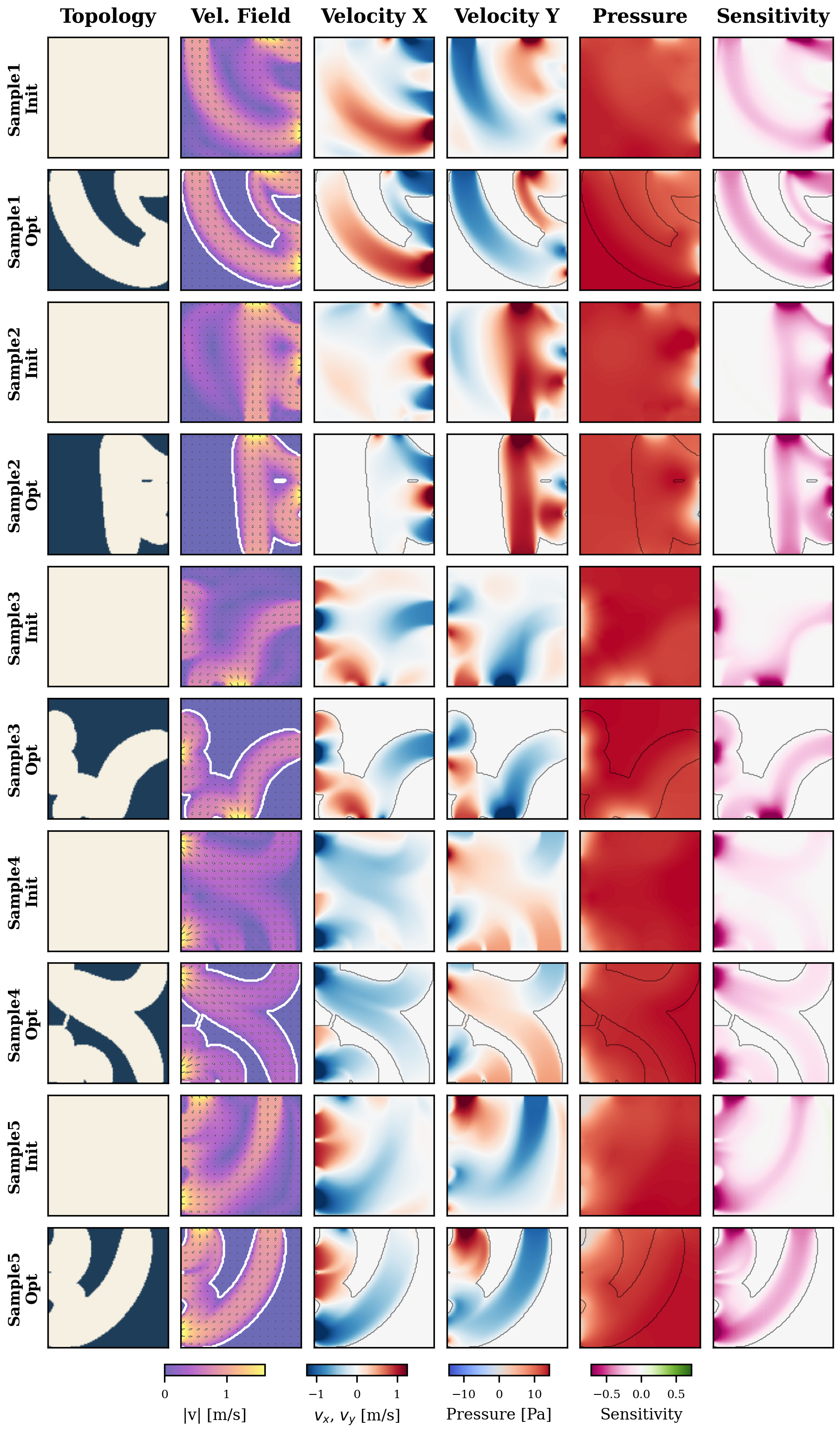}
    \caption{Representative samples from the OOD-medium test set with two outlets. The model must generalize to splitting flow between multiple outlets---a configuration unseen during training. Note the branching channel structures that emerge to efficiently distribute flow.}
    \label{fig:app_twooutlet}
\end{figure}

\begin{figure}[tb]
    \centering
    \includegraphics[width=0.75\linewidth]{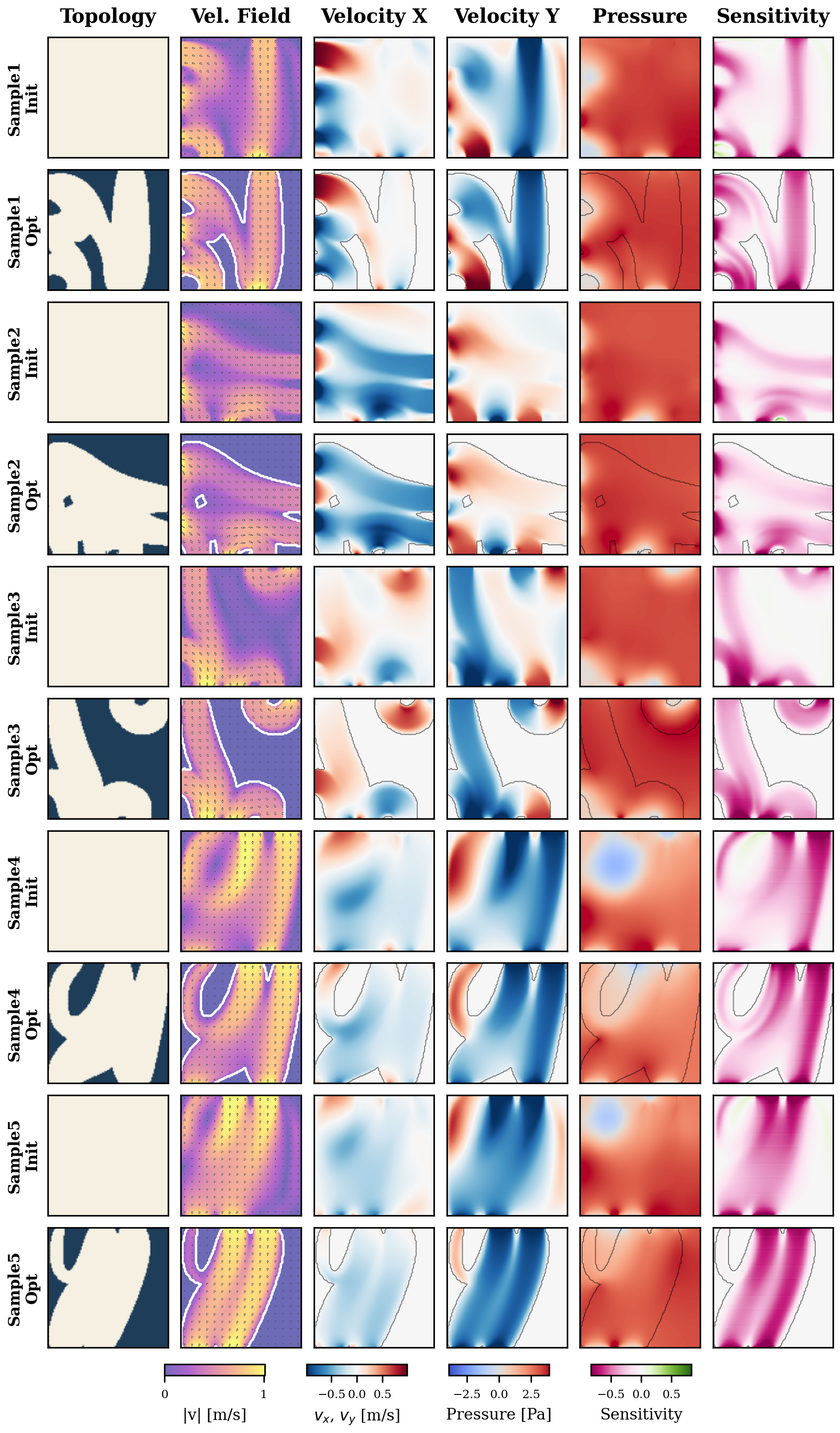}
    \caption{Representative samples from the OOD-hard test set with three outlets. This configuration represents the most challenging distribution shift, requiring complex multi-branch topologies to route flow from varying inlet configurations to three distinct outlet locations.}
    \label{fig:app_threeoutlet}
\end{figure}

\subsection{Dataset Visualization}
\label{appendix:data_visualization}

Figures~\ref{fig:structural_test} and~\ref{fig:structural_ood} present representative samples from the in-distribution and out-of-distribution (OOD) test sets, respectively. Each row shows:

\begin{itemize}
    \item \textbf{BC:} Boundary conditions---red (pinned), green (roller-$y$), blue (roller-$x$).
    \item \textbf{Applied Load:} Force vectors shown as red arrows.
    \item \textbf{Optimal Topology:} Converged material distribution (light = void, dark = solid).
    \item \textbf{Sensitivity:} First-iteration sensitivity field ($\log_{10}$ scale).
    \item \textbf{Strain Energy:} Strain energy density ($\log_{10}$ scale).
    \item \textbf{Displacement:} Normalized displacement magnitude.
\end{itemize}

\begin{figure}[t]
    \centering
    \includegraphics[width=0.75\textwidth]{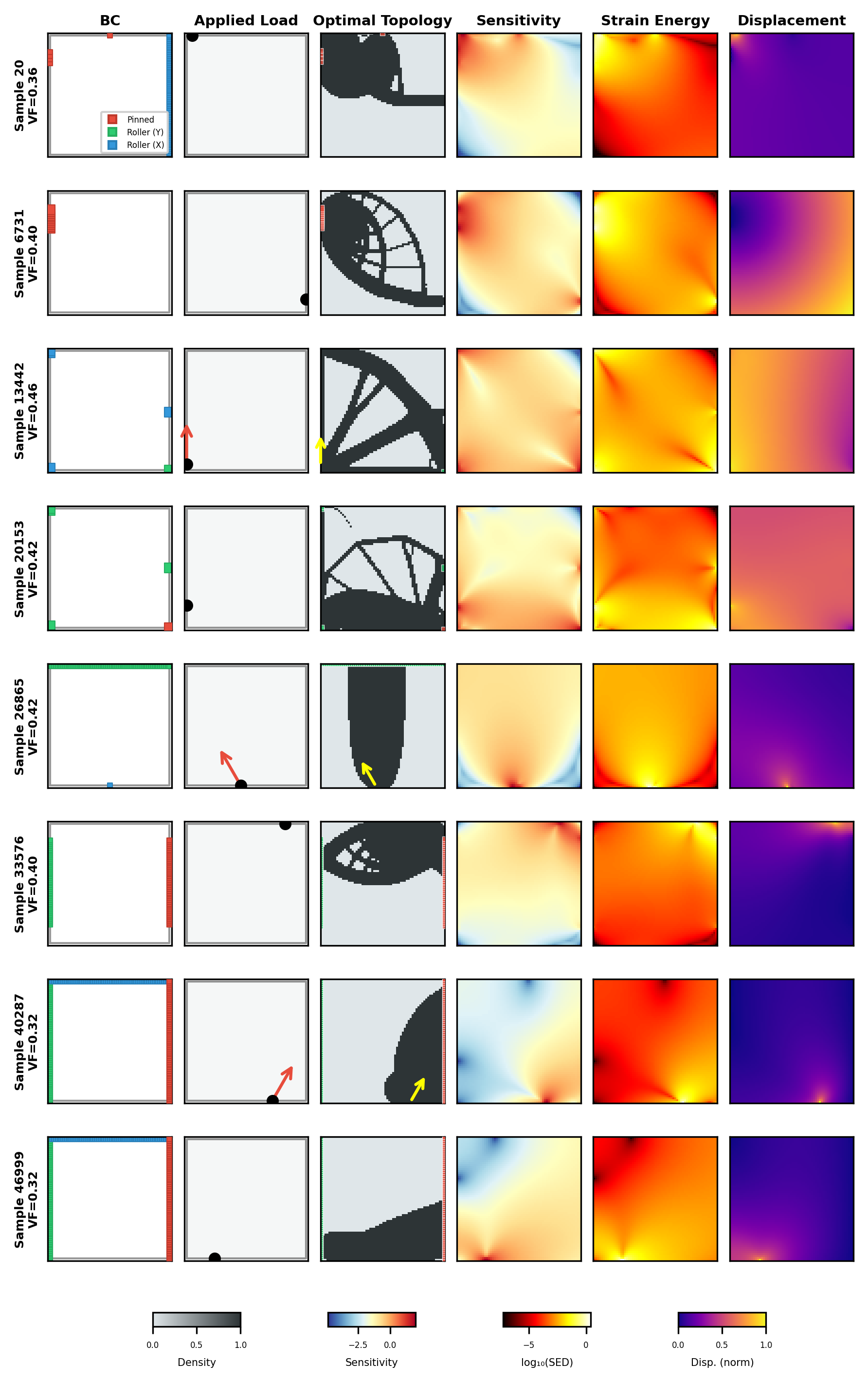}
    \caption{\textbf{In-Distribution Set.} Representative samples showing BC, loads, optimal topologies, sensitivity, strain energy, and displacement magnitude.}
    \label{fig:structural_test}
\end{figure}

\begin{figure}[t]
    \centering
    \includegraphics[width=0.75\textwidth]{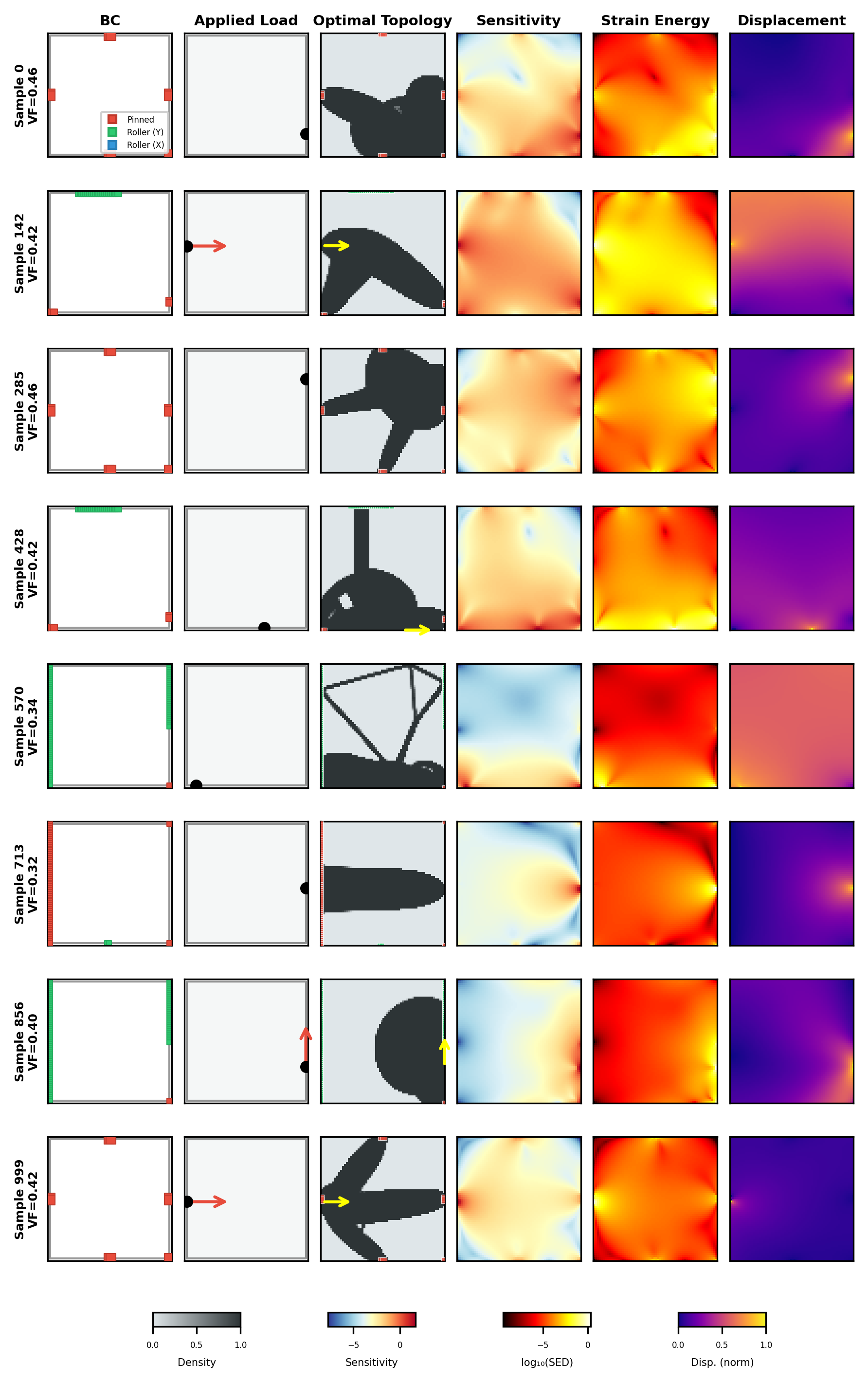}
    \caption{\textbf{Out-of-Distribution Test Set.} Samples with novel boundary condition configurations for evaluating generalization.}
    \label{fig:structural_ood}
\end{figure}
\end{mohammad}
\end{document}